\documentclass[10pt,twoside]{IEEEtran}
\usepackage{amsopn}
\usepackage{amsthm}
\usepackage{etex}
\usepackage{spconf,graphicx}
\usepackage{endnotes}
\usepackage{hyperref}
\usepackage{epsfig,psfrag}
\usepackage{pst-all}
\usepackage{amssymb,amsfonts,upref,cite,epsf,color,bm}
\usepackage{graphicx}
\usepackage{color}
\usepackage{csvsimple}
\usepackage{url}
\usepackage{amsmath}
\usepackage{graphicx}
\usepackage{calc}
\usepackage{booktabs}
\usepackage{tikz,stackengine}
\usepackage{pgfplots}

\usepackage{etex}
\usepackage{algorithm, algpseudocode}
\usepackage{endnotes}

\usepackage{epsfig,psfrag}

\usepackage{color}
\usepackage{calc}

\usepackage{csvsimple}
\usepackage{xinttools} 
\usepackage{tikz,pgfplots}
\usetikzlibrary{calc}
\usetikzlibrary{shapes.misc}


\newtheorem{assumption}{Assumption}
\usepackage{framed}

\newcommand\norm[2][\Tnorm]{\ensuremath{{\|#2\|}_{#1}}}

\newcommand{\set}[1]{\mathcal{#1}} 
\newcommand\defeq{:=}
\newcommand{\featurelen}{p}

\newcommand{\setT}{\mathcal{T}}

\newcommand\vect[1]{\mathbf #1}

\newcommand{\setofT}{\Sigma^{p}_{s}}

\newcommand{\va}{\vect{a}}

\newcommand{\vu}{\vect{u}}  
\newcommand{\vv}{\vect{v}}  
\newcommand{\vw}{\vect{w}}
\newcommand{\vx}{\vect{x}}

\newcommand{\procdim}{p}

\newcommand{\mC}{\vect{C}}

\newcommand{\mK}{\vect{K}}

\newcommand{\mP}{\vect{P}}



\def\baselinestretch{1}
\renewcommand{\baselinestretch}{1.6}\small\normalsize
\def \expect {{\rm E} }
\def \prob {{\rm P} }
\def \twiddle[#1] {e^{-j \frac{2 \pi}{N}  #1 }}
\def \twiddleneg[#1] {e^{j \frac{2 \pi}{N}  #1 }}

\def\x{{\mathbf x}}



\DeclareMathOperator*{\rank}{rank}

\DeclareMathOperator*{\argmin}{arg\;min}

\DeclareMathOperator*{\trace}{tr}

\def\ML_est{\hat{\mathbf{x}}_{\text{ML}}}

\newcommand{\cig}{\mathcal{G}}
\newcommand{\timevar}{n}
\newcommand{\timeidx}{\timevar}
\newcommand{\freqidx}{k}

\newcommand{\lagvar}{m}

\newcommand{\samplesize}{N}
\newcommand\coefflen{p}
\newcommand\coeffdim{p}
\newcommand\sparsity{s}

\newcommand{\nrblocks}{B}
\newcommand{\blocklen}{L}
\newcommand{\errevent}{\mathcal{E}_{\mathcal{T}}}
\newcommand{\error}{\mathcal{E}_{i}}
\newcommand{\blkidx}{b}

\newcommand{\blockLength}{L}

\newcommand{\edges}{\set{E}}
\newcommand{\nodes}{\set{V}}

\newcommand{\blknot}{{(\blkidx)}}

\newcommand{\nat}[1]{[#1]}

\newcommand{\component}[1]{\vx_{#1}}

\newcommand{\CMX}[1]{\mC[#1]}

\newcommand{\bfep}{\bm \varepsilon}


\newcommand{\be}{\begin{equation}}
\newcommand{\ee}{\end{equation}}



\linespread{1}%

\newcommand{\equref}[1]{(\ref{#1})}

\usepackage{algorithm}
\usepackage{algpseudocode}
\floatname{algorithm}{Algorithm}
\algnewcommand\algorithmicinput{\textbf{Input:}}
\algnewcommand\INPUT{\item[\algorithmicinput]}
\algnewcommand\algorithmicoutput{\textbf{Output:}}
\algnewcommand\OUTPUT{\item[\algorithmicoutput]}
\newtheorem{theorem}{Theorem}

\newtheorem{lemma}[theorem]{Lemma}

\newtheorem{corollary}[theorem]{Corollary}

\allowdisplaybreaks

\makeatother


\usepackage{setspace}

\parindent 1.0em

\newlength{\depthofsumsign}
\setlength{\depthofsumsign}{\depthof{$\sum$}}

\algnewcommand\algorithmicforeach{\textbf{for each}}
\algdef{S}[FOR]{ForEach}[1]{\algorithmicforeach\ #1\ \algorithmicdo}


%

%
%
%
%
%
%

\title{On the Sample Complexity of Graphical Model Selection from Non-Stationary Samples}
\name{\hspace*{-10mm} Nguyen Tran$^{1}$,  Oleksii Abramenko$^{1}$, Alexander Jung$^{1}$, 
}

\address{\normalsize $^1$Department of Computer Science, Aalto University, Espoo, Finland; firstname.lastname(at)aalto.fi \\
}

\begin{document}

\maketitle


\begin{abstract}
We characterize the sample size required for accurate graphical model selection for a system which is 
observed via samples (measurements) forming a non-stationary vector-valued time series. In particular, the observed 
data is modelled as a vector-valued zero-mean Gaussian random process whose samples are uncorrelated 
but have different covariance matrices. This model contains as special cases the standard setting of i.i.d.\ samples 
as well as the case of samples forming a stationary time series. More generally, our approach applies to time 
series data for which efficient decorrelation transforms, such as the Fourier transform for stationary time series, 
are available. By analyzing a particular model selection method, we derive a sufficient condition 
on the required sample size for accurate graphical model selection based on non-stationary data. 
\end{abstract}

\vspace*{-2mm}
\section{Introduction}
\label{sec_intro} 

\vspace{-1mm}
A powerful approach to managing massive datasets (big data) is based on network or graph representations of the datasets 
\cite{SandrMoura2014b,NetworkLasso,koller2009probabilistic,BigDataNetworksBook}. Examples of networked data are found 
in signal processing where signal samples can be arranged as a chain, in image processing with pixels arranged on a grid, in 
wireless sensor networks where measurements conform to sensor proximity \cite{SandrMoura2014b}. Organising data using 
networks is also used in knowledge bases (graphs) whose items are linked by relations \cite{WikiData2014,Sadeghi2017}.

Using network models is beneficial from a computational and statistical perspective. Indeed, network models for data lend naturally 
to highly scalable learning algorithms in the form of message passing on the data network \cite{DistrOptStatistLearningADMM}. 
Moreover, the network structure allows to borrow statistical strength across different localized high-dimensional statistical models 
which are associated with individual data points (nodes) \cite{NetworkLasso,Ambos2018}.  Finally, network models provide a high level of 
flexibility in order to cope with heterogeneous datasets composed of different data types (e.g., mixtures of audio, video and text data).

In some applications, the network structure underlying the data is not known explicitly but has to be learned in a data-driven fashion. 
This task can be accomplished in a principled way by using probabilistic graphical models (PGM) \cite{koller2009probabilistic,GraphModExpFamVarInfWainJor}. 
Within a PGM, we interpret data points as realizations of random variables. A particular type of PGM is based on representing the conditional 
independence relations between individual data points using a network structure (graph) \cite{LauritzenGM,GraphModExpFamVarInfWainJor}. 
The problem of estimating the network structure of a PGM from observed data is known as graphical model selection (GMS). 

Many efficient methods have been proposed for GMS for data which is modelled as sequences of i.i.d.\ realizations of some 
underlying random vector \cite{RavWainLaff2010,FriedHastieTibsh2008,JMLRHub}. The extension of GMS from the i.i.d.\ 
setting to cope with correlations between vector samples using stationary process models has been studied in 
\cite{BachJordan04,CSGraphSelJournal,HannakJung2014conf,JuHeck2014,JungGaphLassoSPL}. A robust GMS method 
which is able to cope with outliers is proposed in \cite{Yang2015Nips}. In this paper, we consider the extension of 
GMS to non-stationary time series data. As we will detail below, our approach includes GMS for stationary time series 
as a special case. 

It is of practical relevance for the usage of GMS methods to understand the fundamental requirements on the available data 
such that accurate GMS is possible. For data which can be modelled as i.i.d.\ realizations of a Gaussian random vector (Gaussian Markov random field), 
the required sample size is well understood. A lower bound on the sample size has been obtained by \cite{WangWain2010}, 
which does not place any computational constraints on the GMS method. Remarkably, this lower bound can be achieved by 
computationally tractable convex optimizaton methods \cite{RavWainRaskYu2011} proving them as optimal in terms of sample 
size requirement. By adapting the information-theoretic approach of \cite{WangWain2010}, a lower bound on the sample size 
required for accurate GMS from data conforming to a stationary random process model is presented in \cite{HannakJung2014conf}. 

{\bf Contribution.} Our focus is on the required sample (data) size which allows for accurate GMS. In contrast to most existing work, 
we study GMS for data which cannot be well modelled as a stationary random process. To this end, we propose a simple but useful 
probabilistic model for non-stationary data whose statistical properties vary over time or space (see Section \ref{SecProblemFormulation}). 
This model requires that samples can be grouped into blocks (of known size) within which the samples can be considered as i.i.d. Our model includes, 
as important special cases, the case of i.i.d.\ data as well as data forming a stationary time series. Moreover, the model also applies to 
data which can be represented as either cyclostationary \cite{Kipnis2018}, locally stationary \cite{Mallat98} or underspread random 
processes \cite{TimeFrequencyAnalysisBoashash}. Thus, in contrast to existing GMS methods \cite{BachJordan04,CSGraphSelJournal,JungGaphLassoSPL,Dahlhaus2000,Eichler03} 
which require stationary time series, we consider GMS from non-stationary time series data. 


In general, the process model used in this paper is applicable whenever an efficient decorrelation 
transformation, which allows one to turn the raw data into blocks of i.i.d. random vectors, is available. 
An important example is the Fourier transform of a stationary time series which can be well 
approximated as block-wise i.i.d.\ samples (see Section \ref{SecProblemFormulation}). 
Our process model has also been used in \cite{DanaherGroupGLASSO2014,LeeLiu2015} in the context of a bioinformatics 
application. However, while \cite{DanaherGroupGLASSO2014,LeeLiu2015} aims at learning different graphical models 
for each block, we are interested in learning a single global graphical model for all blocks. 

The main focus of \cite{DanaherGroupGLASSO2014,Peterson2013} is the design of computationally 
feasible GMS methods (e.g., based on convex optimization). Instead, our aim is not the design of a 
computationally tractable (``polynomial time'') GMS method but rather a characterization of the 
required amount of data (sample size) for reliable GMS. To this end we provide a careful analysis 
of a computationally intractable neighborhood regression method which amounts to an exhaustive 
search for conditional dependencies between two particular data points (represented by two nodes 
in the PGM), when conditioning on all remaining data points. 

Our conceptual approach to GMS extends the sparse neighbourhood regression approach 
put forward in \cite{MeinBuhl2006} for GMS from i.i.d.\ samples to the non-stationary setting. 
However, while \cite{MeinBuhl2006} proposes a computationally attractive convex relaxation of 
sparse neighbourhood regression using a Lasso-based estimator, we are mainly interested in the 
fundamental limits on the required sample size without constraining the computational complexity 
of the GMS method.

The main contribution of this work is a precise characterization of the sample size required for 
accurate GMS from non-stationary data. In particular, we show that the required sample size 
depends crucially on the minimum average connection strength between the individual  process 
components. If this quantity is sufficiently large, accurate GMS is possible even in the high-dimensional 
regime, where the length of the vector samples might (drastically) exceed the number of available 
training samples (data points).  

{\bf Outline.} After formalizing the problem setup in Section~\ref{SecProblemFormulation}, we analyze 
a simple GMS method, which we term sparse neighbourhood regression, in  Section~\ref{sec_GMS_neighbrohoud_regr}. 
In particular, for a given sample size and sparsity level of the network structure, we derive an upper bound on the 
probability that sparse neighbourhood regression fails in recovering the correct network structure of the PGM. 
This upper bound on the error probability implies an upper bound on the required sample size such that GMS is 
feasible. We verify our theoretical findings by means of numerical experiments in Section \ref{sec_numerical_results}. 

\paragraph*{Notation} 
For a vector $\vx=(x_{1},\ldots,x_{d})^{T} \in \mathbb{R}^{d}$, the Euclidean and $\infty$-norm are $\|\vx\|_{2} \!\defeq\! \sqrt{\vx^T\vx}$ and 
$\| \vx \|_{\infty} \!\defeq\! \max_{i} |x_{i}|$, respectively. The $m$-th largest eigenvalue of a positive semidefinite (psd) 
matrix $\mathbf{C}$ is $\lambda_{m}(\mC)$. Given a matrix $\mathbf{Q}$, we denote its transpose, trace, rank, 
spectral norm and Frobenius norm by $\mathbf{Q}^{T}$, $\trace\{\mathbf{Q}\}$, $\rank\{\mathbf{Q}\}$, $\| \mathbf{Q} \|_{2}$ 
and $\| \mathbf{Q} \|_{\rm F}$, respectively. For a finite sequence of matrices $\mathbf{Q}_{l} \in \mathbb{R}^{d \times d}$, 
with $l=1,\ldots \nrblocks$, we denote by ${\rm blkdiag} \{ \mathbf{Q}_{l} \}$ the block diagonal matrix of size $\nrblocks d \times \nrblocks d$ 
with the $l$th diagonal block given by $\mathbf{Q}_{l}$. The identity matrix of size $d \times d$ is $\mathbf{I}_{d}$. The minimum (maximum) of two numbers $a$ 
and $b$ is denoted $a\!\wedge\!b$ ($a\!\vee\!b$). The set of non-negative real (integer) numbers is 
denoted $\mathbb{R}_{+}$ ($\mathbb{Z}_{+}$). The probability of an event $\mathcal{E}$ is $\prob\{\mathcal{E}\}$. 
The complement of an event $\mathcal{A}$ is denoted $\mathcal{A}^{c}$. The expectation of a random variable $y$ is $\expect\{y\}$. 

\vspace{-2mm}
\section{Problem Formulation}
\label{SecProblemFormulation}
\vspace{-1mm}

We consider a system which is constituted by $\procdim$ components $\vx_{i}$, for $i=1,\ldots,\procdim$. In a 
bioinformatics application, such a system might be a gene regulatory network with the components $\vx_{i}$ 
representing concentrations of particular genes \cite{DavidsonLevin2005}. The system is observed by acquiring 
$\samplesize$ vector-valued samples $\{\vx[\timevar]\}_{\timevar=1}^{\samplesize}$, each sample 
\begin{equation}
\nonumber
\vx[\timevar] = \big(x_{1}[\timevar],\ldots,x_{\featurelen}[\timevar]\big)^{T} \!\in\! \mathbb{R}^{\procdim}
\end{equation} 
constituted by $\procdim$ scalar ``measurements'' $x_i[\timevar]$ for $i =1,\ldots,\featurelen$. 

The samples $\vx[\timeidx]$ are modelled as realizations of zero-mean Gaussian random vectors, which 
are uncorrelated such that 
\begin{equation}
\nonumber 
\expect \big\{ \vx[\timevar] \big(\vx[\timevar']\big)^{T} \big\} \!=\! \mathbf{0} \mbox{ for }\timevar \!\neq\! \timevar'.
\end{equation}  
The probability distribution of the samples $\vx[\timeidx]$ is fully specified by 
the covariance matrices 
\begin{align}
\label{equ_def_cov_matrix}
\mC[\timevar] \defeq \expect\big\{ \vx[\timevar] \big( \vx[\timevar] \big)^{T} \big\}. 
\end{align}
In general, the covariance matrix $\mathbf{C}[\timevar]$ varies with sample index $\timevar$, i.e., 
$\mathbf{C}[\timevar] \neq \mathbf{C}[\timevar']$ for $\timevar \!\neq\! \timevar'$ in general. However, 
we do not allow for arbitrary variation of the covariance matrix but require it to be constant over 
blocks of $\blocklen$ consecutive samples $\vx[\timevar],\ldots,\vx[\timevar\!+\!\blocklen\!-\!1]$. 
We model the observed samples as blocks of i.i.d.\ Gaussian random vectors, 
\vspace{-1mm}
\begin{align} 
\label{equ_proc_model} 
\underbrace{\vx[1], \ldots,\vx[\blocklen]}_{{\rm i.i.d.} \sim \mathcal{N}(\mathbf{0},\mathbf{C}^{(\blkidx=1)})},\ldots,\ldots,\ldots,
\underbrace{\vx[\samplesize\!-\!\blocklen\!+\!1],\ldots,\vx[\samplesize]}_{{\rm i.i.d.} \sim \mathcal{N}(\mathbf{0},\mathbf{C}^{(\nrblocks)})}.
\end{align}

Our goal is to estimate the conditional dependencies between the components $\vx_{i}$ which are represented by 
the sequences $x_{i}[1],\ldots,x_{i}[\samplesize]$ in \eqref{equ_proc_model}. Such a global dependence structure 
between entire sequences has also been considered in \cite{BachJordan04}. However, \cite{BachJordan04} 
considered stationary time series data, we consider global dependence structure between quantities that are 
modelled as a non-stationary process of the form \eqref{equ_proc_model} (which contains the Fourier transform 
of stationary time series as a special case). 

The vector samples $\vx[\timevar]$ in \eqref{equ_proc_model} are uncorrelated (independent) zero-mean Gaussian 
vectors with covariance matrix
\begin{equation}
\label{equ_cov_matrix_blocks}
\mathbf{C}[\timevar] \!=\! \mathbf{C}^{(\blkidx)} \mbox{ for } \timevar \in \{(\blkidx-1)\blockLength+1, \ldots, \blkidx \blockLength\}.
\end{equation}
For ease of exposition and without essential loss of generality, we henceforth assume the sample size $\samplesize$ 
to be a integer multiple of the block length $\blocklen$ (which is assumed fixed and known), such that $\samplesize\!=\!\nrblocks \blocklen$, 
with the number $\nrblocks$ of data blocks. Moreover, we tacitly assume the covariance matrices $\mC[\timevar]$ 
to be non-singular (invertible) with inverse $\big( \mC[\timevar] \big)^{-1}$ (see Assumption \ref{aspt_eig_val} below).

The model \eqref{equ_proc_model} reduces to the i.i.d.\ setting for $\nrblocks = 1$ and block length $ \blockLength = \samplesize$. 
In this paper, we study the fundamental limits of accurate GMS based on non-stationary data which conforms to 
the model \eqref{equ_proc_model} with $\nrblocks>1$ (the non-stationary setting).

At first glance, the process model \eqref{equ_proc_model} might seem overly restrictive as it still requires blocks of 
consecutive samples to be i.i.d. However, as we will now discuss, the model \eqref{equ_proc_model} can be used 
as an approximation at least for some important classes of random processes. For each 
of these process classes we are able to identify useful choices for the block length $L$ in \eqref{equ_proc_model}.  

{\bf Stationary Processes.} The model \eqref{equ_proc_model} covers the case where the observed 
samples form a stationary process \cite{CSGraphSelJournal,Dahlhaus2000,BachJordan04,JungGaphLassoSPL}. 
Indeed, consider a zero-mean Gaussian stationary process $\vx[\timevar]$ with auto-covariance function 
\begin{equation} 
\mathbf{R}_{x}[\lagvar] \defeq \expect \big\{ \vx[\timevar] \big(\vx[\timevar\!-\!\lagvar]\big)^{T} \big\}
\end{equation}
and spectral density matrix (SDM)  \cite{Dahlhaus2000}
\begin{equation} 
\mathbf{S}_{x}(\theta) \defeq \sum_{\lagvar=-\infty}^{\infty} \mathbf{R}_{x}[\lagvar] \exp(- j 2 \pi \theta \lagvar).
\end{equation} 
Let 
\begin{equation} 
\nonumber
\hat{\vx}[\freqidx]\defeq (1/\sqrt{\samplesize}) \sum_{\timevar=1}^{\samplesize} \vx[\timevar] \exp( - j 2 \pi (\timevar\!-\!1)(\freqidx\!-\!1)/\samplesize)
\end{equation} 
denote the discrete Fourier transform (DFT) of the stationary process $\vx[\timeidx]$. Then, by well-known properties 
of the DFT (see, e.g., \cite{Brockwell91}), the vectors $\hat{\vx}[\freqidx]$, for $\freqidx=1,\ldots,\samplesize$, are 
approximately uncorrelated Gaussian random vectors with zero mean and covariance matrix $\mathbf{C}[\freqidx] \approx \mathbf{S}_{z}(\freqidx/\samplesize)$. 
For a stationary process $\vx[\timeidx]$ with (effective) correlation width $W$, the SDM is approximately constant (flat) 
over a frequency interval of length $1/W$. Thus, the DFT vectors $\hat{\vx}[\freqidx]$ approximately conform to the process 
model \eqref{equ_proc_model} with block length $\blocklen=\samplesize/W$ (since the DFT vectors correspond to SDM 
samples at evenly spaced frequency points separated by $1/\samplesize$).

{\bf Cyclostationary Processes.} As detailed in \cite{Kipnis2018}, (discrete-time) cyclostationary processes can be transformed 
to vector-valued (or multivariate) stationary processes which can then, in turn, be transformed to a process of the form 
\eqref{equ_proc_model} via a DFT. 

{\bf Locally Stationary Processes.} The process model \eqref{equ_proc_model} applies to locally stationary 
processes \cite{Starica2005,Wahlberg2007,Mallat98}. The i.i.d.\ blocks of $\blocklen$ consecutive vector 
samples in \eqref{equ_proc_model} correspond to the homogeneity intervals defined in \cite{Starica2005}. 
Particular approaches for optimally chosing the block length $L$ for the model \eqref{equ_proc_model} are 
studied in \cite{Starica2005,Dahlhaus98}. One important example of locally stationary processes 
are time-varying autoregressive processes which extend traditional autoregressive process models by 
allowing time-varying regression coefficients \cite{Dahlhaus2009,Brockwell91}. 

{\bf Underspread Processes.} The process model \eqref{equ_proc_model} can be used for  
underspread non-stationary processes \cite{jung-specesttit,TimeFrequencyAnalysisBoashash}. 
A continuous-time random process $\vx(t)$ is underspread if its expected ambiguity function (EAF) 
\begin{align*}
\bar{\mathbf{A}}_{x}(\tau,\nu) \!\defeq\! \hspace{-1mm} \int\limits_{t\!=\!-\infty}^{\infty} \hspace{-3mm} \expect \big\{ \vx(t\!+\!\tau/2) \vx^{T}(t\!-\!\tau/2)  \big\} \exp(- j 2 \pi t \nu) dt
\end{align*} 
is well-concentrated around the origin in the $(\tau,\nu)$ plane. In particular, if the EAF $\bar{\mathbf{A}}_{x}(\tau,\nu)$ of $\vx(t)$ is (effectively)
supported on the rectangle $[-\tau_{0}/2,\tau_{0}/2] \times [-\nu_{0}/2,\nu_{0}/2]$, then the process $\vx(t)$ is underspread if $\tau_{0} \nu_{0} \ll 1$. 

It can be shown that for a suitably chosen prototype function $g(t)$ (e.g., a Gaussian pulse) and grid 
constants $T$ and $F$, the Weyl-Heisenberg set $\big\{ g^{(n,k)}(t) \defeq g(t\!-\!nT)e^{-2 \pi k F t} \big\}_{n,k\in \mathbb{Z}}$, yields zero-mean analysis 
coefficients $\hat{\vx}[n,k] = \int_{t=-\infty}^{\infty} \vx(t) g^{(n,k)}(t) dt$ which are approximately uncorrelated and provide a complete representation 
of the process $\vx(t)$. Moreover, the covariance matrix of $\hat{\vx}[(n,k)]$ is approximately equal to the value $\overline{\mathbf{W}}_{x}(nT,kF)$ 
of the Wigner-Ville spectrum (WVS) \cite{Velez90}
\begin{align*}
\overline{\mathbf{W}}_{x}(t,f) \defeq \hspace*{-2mm} \int\limits_{\tau,\nu=-\infty}^{\infty} \bar{\mathbf{A}}_{x}(\tau,\nu)  \exp(- 2 \pi (f \tau - \nu t)) d \tau d \nu
\end{align*}
which can be loosely interpreted as a time-varying power spectral density.
For an underspread process whose EAF is effectively supported on $[-\tau_{0}/2,\tau_{0}/2] \times [-\nu_{0}/2,\nu_{0}/2]$, 
the WVS $\overline{\mathbf{W}}_{x}(nT,kF)$ is approximately constant over a rectangle of area $\approx 1/(\tau_{0} \nu_{0})$. 
Thus, the vectors $\hat{\vx}[(n,k)]$ approximately conform to the process model \eqref{equ_proc_model} with block length 
$\blocklen \!\approx\! \frac{1}{TF\tau_{0} \nu_{0}}$. 

{\bf Conditional Independence Graph.} 
We now define a PGM for the observed samples $\{ \vx[\timeidx]\}_{\timeidx=1}^{\samplesize}$ (cf. \eqref{equ_proc_model}) by identifying the 
individual components 
\begin{equation}
\label{equ_component_i}
\component{i}=(x_{i}[1],\ldots,x_{i}[\samplesize])^{T}
\end{equation} 
with the nodes $\nodes\!=\!\{1,\ldots,\procdim\}$ of an undirected simple graph $\cig=(\nodes,\edges)$ (see Figure \ref{fig_process_model}). 
This graph encodes conditional independence relations between the components $\component{i}$ and is hence called the 
conditional independence graph (CIG) of the process $\vx[\timevar]$. In particular, an edge is absent between nodes $i, j \in \nodes$, i.e., $\{i,j\} \notin \edges$,  
if the corresponding process components $\component{i}$ and $\component{j}$ 
are conditionally independent, given the remaining components $\{ \component{r} \}_{r \in \nodes \setminus \{i,j\}}$. 

We highlight that the CIG $\cig$ represents stochastic dependencies between the components $ \big\{ \component{i} \big\}_{i=1}^{\featurelen}$ 
(see \eqref{equ_component_i}) of the vector samples $\vx[1],\ldots,\vx[\samplesize]$ in a global fashion, i.e., jointly for all 
$\timevar\!=\!1,\ldots,\samplesize$. In particular, the edge set $\edges$ does not depend on the sample index $\timevar$ since 
we define the CIG for the entire sample process for $\timevar=1,\ldots,\samplesize$.\footnote{In principle, it is also possible to define 
a CIG $\cig^{(\timevar)}$ separately for each individual sample $\vx[\timevar] = \big(x_{1},\ldots,x_{\featurelen}[\timevar]\big)^{T}$,
which can be interpreted as a single realization of a Gaussian random Markov field. The edge set of the global CIG we are considering 
in this paper is the union of the edge sets in the sample-wise CIGs $\cig^{(\timevar)}$.}
Using a global CIG between data points which are modelled as non-stationary vector samples is useful for many applications 
(see \cite{BachJordan04,DanaherGroupGLASSO2014,Dahlhaus2000,Eichler03} and references therein). 

\begin{figure}
\centering
\begin{tikzpicture}[x=0.5cm,y=1cm]
 \node[draw, rounded corners,thick] (x1) at (3,4) {$\vx_1\!=\!\big(x_{1}[1],\ldots,x_{1}[\samplesize]\big)^{T}$};
  \node[draw, rounded corners,thick] (x2) at (-1.5,1) {$\vx_2\!=\!\big(x_{2}[1],\ldots,x_{2}[\samplesize]\big)^{T}$};
   \node[draw, rounded corners,thick](x3) at (7,1) {$\vx_3\!=\!\big(x_{3}[1],\ldots,x_{3}[\samplesize]\big)^{T}$};
     \draw[-,thick]    (x1) -- (x2);
       \draw[-,thick]    (x1) -- (x3);
\end{tikzpicture}
\caption{Example of a CIG underlying vector-valued samples $\vx[\timeidx] = (x_1[\timeidx], x_1[\timeidx], x_3[\timeidx])^T$, 
for $\timeidx=1,\ldots,\samplesize$ (see \eqref{equ_proc_model}). 
}
\label{fig_process_model}
\vspace*{-5mm}
\end{figure}

Since we model the observed samples $\vx[\timevar]$ as realizations of a Gaussian process (see \eqref{equ_proc_model}), 
the edges of the CIG can be read off conveniently from the inverse covariance (precision) matrices  
$\mK[\timevar] \defeq \big(\mC[\timevar]\big)^{-1}$ (see \eqref{equ_def_cov_matrix}). In particular, $\component{i}$ are 
$\component{j}$ are conditionally independent, 
given $\{ \component{r} \}_{r \in \mathcal{V} \setminus \{i,j\}}$, 
if and only if $K_{i,j}[\timevar] =0$ for all $\timevar \in \{1,\ldots,\samplesize\}$ \cite[Prop.\ 1.6.6]{Brockwell91}.
Thus, we have the following characterization of the CIG $\cig$ 
associated with the process $\vx[\timeidx]$: 
\begin{equation}
\label{equ_edge_absent_corr_operator}
 \{ i,j \} \!\notin\! \edges  \mbox{ if and only if } K_{i,j}[\timevar]\!=\!0 \mbox{ for } \timevar=1,\ldots,\samplesize.
\end{equation} 
Note that the CIG characterization \eqref{equ_edge_absent_corr_operator} involves a coupling over all 
samples $\{ \vx[\timevar] \}_{\timevar \in \{1,\ldots,\samplesize\}}$. Indeed, an edge is absent between 
two different nodes $i,j \in \nodes$ in the CIG, i.e., $\{i,j\} \notin \edges$, if and only if the precision matrix 
entry $ K_{i,j}[\timevar]$ is zero \emph{for all} $\timeidx \in \{1,\ldots,\samplesize\}$. 

The strength of a connection $\{i,j\} \in \edges$ between process components $\component{i}$ 
and $\component{j}$ is measured by the \emph{average connection strength}
\vspace*{-2mm}
\begin{align}
\rho^{2}_{i,j}  & \defeq 
(1/\samplesize) \sum_{\timeidx=1}^{\samplesize} K^{2}_{i,j}[\timeidx]  /  K_{i,i}^2[\timeidx]  \nonumber \\
& \stackrel{\eqref{equ_cov_matrix_blocks}}{=} (1/\nrblocks)   \sum_{\blkidx=1}^{\nrblocks} \big( K^{(\blkidx)}_{i,j}\big)^{2}  /  \big( K^{(\blkidx)}_{i,i}\big)^{2}.
\label{equ_partial_correlation_def}
\vspace*{-2mm}
\end{align}
We highlight that the quantity $\rho^{2}_{i,j}$ is determined by the conditional distribution of $\vx_{i}$ 
and $\vx_{j}$ given all the remaining process components $\vx_{m}$ with $m \in \{1,\ldots,\coeffdim\} \setminus \{i,j\}$. 
The average connection strength $\rho^{2}_{i,j}$ is closely related to the average squared 
partial correlations (conditional correlation coefficients) \cite{MeinBuhl2006,WangWain2010,RavWainRaskYu2011}
\begin{equation}
\nonumber
(1/\nrblocks)\sum_{\blkidx=1}^{\nrblocks}  \big( K^{(\blkidx)}_{i,j}[\timeidx] \big)^{2}  /  (K^{(\blkidx)}_{i,i}[\timeidx] K^{(\blkidx)}_{j,j}[\timeidx]).
\end{equation}
In contrast to the squared partial correlation, the connection strength \eqref{equ_partial_correlation_def} 
is not symmetric since $\rho^{2}_{i,j} \neq \rho^{2}_{j,i}$ in general. 
However, we find the non-symmetric notion of connection strength more natural for our analysis of the 
simple GMS method proposed in Section \ref{sec_GMS_neighbrohoud_regr}. 

By \eqref{equ_edge_absent_corr_operator} and \eqref{equ_partial_correlation_def}, two nodes 
$i, j \in \nodes$ are connected by an edge $\{i,j\}\!\in\!\edges$ if and only if $\rho_{i,j}\!\neq\!0$. 
Note that $\rho_{i,j}$ is an average measure, i.e., even if the marginal connection strength 
$K^{2}_{i,j}[\timeidx]  /  K_{i,i}^2[\timeidx]$ is very small for some $n$, the average connection 
strength $\rho^{2}_{i,j}$ might still be sufficiently large. The definition \eqref{equ_partial_correlation_def} is a natural 
extension of (a non-symmetric version of) \cite[Eq. (3)]{WangWain2010}, which considers i.i.d.\ samples, 
to the non-stationary model \eqref{equ_proc_model} considered in this paper. 

Accurate estimation of the CIG $\cig$ based on a finite number $\samplesize$ of samples (incurring unavoidable estimation errors) 
is only possible for sufficiently large connection strength $\rho^{2}_{i,j}$ for all edges $\{i,j\} \in \edges$ in the CIG $\cig$. 
\vspace*{-1mm}
\begin{assumption} 
\label{aspt_minimum_par_cor}
The average connection strength $\rho_{i,j}^{2}$ (see \eqref{equ_partial_correlation_def}) between any two connected 
components $\vx_{i}$ and $\vx_{j}$ with $\{i,j\} \in \edges$ is lower bounded 
\vspace*{-2mm}
\begin{equation} 
\rho^{2}_{i,j} \geq \rho^{2}_{\rm min} \hspace*{2mm} \mbox{ for every } \{i,j\} \in \edges, 
\label{equ_rho_min_aspt}
\vspace*{-2mm}
\end{equation}
with some known lower bound $\rho^{2}_{\rm min}>0$.
\end{assumption}

Given a node $i \in \nodes$ in the CIG $\cig$, we denote its neighbourhood and degree as $\mathcal{N}(i) \defeq \{ j \in \nodes \setminus \{i\} : \{ i,j\} \in \edges\}$ 
and $s_{i} = |\mathcal{N}(i)|$, respectively. While in principle, our analysis of GMS applies to processes with arbitrary CIG structure, 
our results will be most useful if the underlying CIG $\cig$ is sparse in the sense of having a small (bounded) maximum node degree. 
\vspace*{-1mm}
\begin{assumption} 
	\label{aspt_cig_sparse}
	The node degrees in the CIG are bounded by some sparsity level $\sparsity$ as  
	\vspace*{-2mm}
	\begin{equation}
	\max_{i \in \nodes} s_{i} \leq \sparsity \mbox{, with  } \sparsity\!<\! (\procdim/3) \!\wedge\! (\blockLength/3).  
	\label{equ_sparsity_aspt}
	\end{equation}
\end{assumption}
We highlight that our approach to GMS for the process \eqref{equ_proc_model} requires a known sparsity level $\sparsity$ 
for the upper bound \eqref{equ_sparsity_aspt}. However, in contrast to \cite{Wain2009TIT}, the sparsity $\sparsity$ is only 
required to form an upper bound on the node degrees $s_{i}$. In particular, our approach is able to handle nodes $i \in \cig$ 
which have smaller degrees $s_{i} < \sparsity$. 

The requirement \eqref{equ_sparsity_aspt} implies a trade-off between the block length $\blocklen$ of consecutive i.i.d.\ samples 
in \eqref{equ_proc_model} and the sparsity $\sparsity$ of the underlying CIG. In particular, for a given sample size $\samplesize$, 
we can tolerate less smoothness (smaller block length $\blockLength$ in \eqref{equ_proc_model}), if the underlying CIG is 
more sparse (having smaller maximum degree $\sparsity$).

It will be notationally convenient to assume the samples $\vx[\timeidx]$ suitably scaled such that 
the eigenvalues of the covariance matrices $\CMX{\timevar}$ are bounded with known constants. 
\vspace*{-1mm}
\begin{assumption} 
\label{aspt_eig_val}
The eigenvalues of the covariance matrices $\CMX{\timeidx}$ are bounded as 
\vspace*{-2mm}
\begin{equation} 
\label{equ_bounds_eigvals_asspt3}
1\!\leq\! \lambda_{l}(\CMX{\timeidx})\!\leq\!  \beta \mbox{ for all } l \in \{1,\ldots,\coeffdim\} \mbox{ and } \timeidx \in \{1,\ldots,\samplesize\},
\end{equation}
with some known upper bound $\beta\!\geq\!1$.
\end{assumption}
Fixing the lower bound in Assumption \ref{aspt_eig_val} to be equal to $1$ is not restrictive since we assume the covariance 
matrices $\CMX{\timeidx}$ to be invertible. 

\section{Sparse Neighborhood Regression}
\label{sec_GMS_neighbrohoud_regr}

The CIG $\cig$ of the process $\vx[\timevar]$ in \eqref{equ_proc_model} is fully specified by the neighbourhoods of the 
nodes in the CIG. Indeed, rather trivially, we can determine the CIG by determining the neighbourhoods $\mathcal{N}(i)$ 
separately for each node $i \in \nodes$. Thus, without loss of generality, we will focus on the sub-problem of determining 
the neighbourhood $\mathcal{N}(i)$ of an arbitrary but fixed node $i \in \nodes$.

In view of the process model \eqref{equ_proc_model} we define, for an arbitrary but fixed block $\blkidx \!\in\! \{1,\ldots,\nrblocks\}$, 
the $i$th process component as
\begin{equation*}
\vx_i^\blknot \defeq \big(x_{i}[(\blkidx-1)\blockLength+1],\ldots,x_{i}[\blkidx \blockLength]\big)^{T} \in \mathbb{R}^{\blockLength}. 
\end{equation*}  
The process components of different blocks are uncorrelated, i.e., 
\begin{equation}
\nonumber
\expect \big\{ \vx_i^{(\blkidx)} \big( \vx_{j}^{(\blkidx')} \big)^{T} \big\} = \mathbf{0} \mbox{ for } \blkidx \neq \blkidx' \mbox{ and any } i,j\in \nodes. 
\end{equation} 
Elementary properties of multivariate normal distributions (see, e.g., \cite[Thm. 3.5.1]{Gallager13}) and the fact 
$K_{i,j}[\timevar] = 0$ for $j \notin \mathcal{N}(i)$ (cf. \eqref{equ_edge_absent_corr_operator}), yield
\begin{equation} 
\label{equ_comp_2}
\vx_i^\blknot = \sum_{j \in \mathcal{N}(i)} a_{j} \vx_j^\blknot  + \bfep_{i}^\blknot,
\end{equation} 
with the coefficients $a_j \!=\! -K^{(\blkidx)}_{i,j}/K^{(\blkidx)}_{i,i}$.
The error vector $\bfep_{i}^\blknot \sim  \mathcal{N}(\mathbf{0},(1/K^{(\blkidx)}_{i,i}) \mathbf{I}_{\blockLength})$ 
is uncorrelated with the vectors $\big\{ \vx_j^\blknot \big\}_{j \in \mathcal{N}(i)}$. Note that the random vector 
$\sum_{j \in \mathcal{N}(i)} a_{j} \vx_j^\blknot$ in \eqref{equ_comp_2} is the minimum mean squared error (MMSE) estimator 
of $\vx_i^\blknot$ using the random vectors $\big\{ \vx_j^\blknot \big\}_{j\in \mathcal{N}(i)}$ as observations (see \cite{papoulis}).

Given some index set $\mathcal{T} \subseteq \{1,\ldots,\procdim\}$ with $\mathcal{N}(i)\setminus \setT  \neq \emptyset$, 
another application of \cite[Thm. 3.5.1]{Gallager13} to the component $\sum_{j \in \mathcal{N}(i)} a_{j} \vx_j^\blknot$ 
in the decomposition \eqref{equ_comp_2} yields
\vspace*{-0mm}
\begin{equation}
\label{equ_comp_1}
\vx_i^\blknot = \underbrace{\sum_{j \in \setT} c_{j} \vx_j^\blknot  + \tilde{\vx}_i^\blknot}_{=\sum_{j \in \mathcal{N}(i)} a_{j} \vx_j^\blknot \mbox{ see \eqref{equ_comp_2}}} + \bfep_{i}^\blknot ,
\vspace*{-1mm}
\end{equation} 
with the random vectors $\tilde{\vx}_{i}^\blknot$, $\{ \vx_j^\blknot  \}_{j \in \setT}$ and $\bfep_{i}^\blknot$ being jointly Gaussian. 
Moreover, the random vectors $\tilde{\vx}_{i}^\blknot$ are uncorrelated with the random vectors $\{ \vx_j^\blknot  \}_{j \in \setT}$, $\bfep_{i}^\blknot$ 
and distributed as 
\begin{equation} 
\label{equ_distribution_tilde_vx}
\tilde{\vx}_{i} ^\blknot\!\sim\!\mathcal{N}(\mathbf{0},\tilde{\sigma}^{2}_{\blkidx} \mathbf{I}_{\blockLength}).
\end{equation} 
Note that the vector $ \tilde{\vx}_i^\blknot$ in \eqref{equ_comp_1} is the estimation error incurred by the MMSE estimator of the 
random vector $\sum_{j \in \mathcal{N}(i)} a_{j} \vx_j^\blknot$ using $\{ \vx_j^\blknot  \}_{j \in \setT}$ as observations. 

Using \cite[Thm. 3.5.1]{Gallager13}, the variance $\tilde{\sigma}^{2}_{\blkidx}$ of (the i.i.d.\ entries of) $\tilde{\vx}_{i}^\blknot$ can be obtained as 
\begin{equation}
\label{equ_tilde_sigma_b} 
\tilde{\sigma}_{\blkidx}^{2} = \mathbf{a}^{T} \big( \widetilde{\mathbf{K}}^{\blknot} \big)^{-1} \mathbf{a}
\end{equation} 
with the matrix $\widetilde{\mathbf{K}}^{\blknot} = \big( \big( \mC^{(\blkidx)}_{\mathcal{N}(i) \cup \mathcal{T}} \big)^{-1} \big)_{\mathcal{N}(i) \setminus \setT}$ 
and the vector $\va \in \mathbb{R}^{|\mathcal{N}(i)\setminus \setT|}$ whose entries are given by $ a_j = -K^{(\blkidx)}_{i,j}/K^{(\blkidx)}_{i,i}$, 
for $j \!\in\! \mathcal{N}(i)\setminus \setT$. In what follows we will make use of a lower bound on the variance $\tilde{\sigma}_{\blkidx}^{2}$ which 
is due to  Assumption \ref{aspt_eig_val}. Indeed, by Assumption \ref{aspt_eig_val} we have $\lambda_{l} \big( \big(\widetilde{\mathbf{K}}^{\blknot}\big)^{-1} \big)\!\geq\!1$, 
for all $l=1,\ldots,|\mathcal{N}(i) \setminus \setT|$, which implies (see \eqref{equ_tilde_sigma_b}) the lower bound     
\vspace*{0mm}
\begin{align}
\tilde{\sigma}^{2}_{\blkidx} & \stackrel{}{\geq}  \hspace*{-2mm} \sum_{j\in \mathcal{N}(i)\setminus \setT} (K_{i,j}^{(\blkidx)}/K_{i,i}^{(\blkidx)})^2. 
\label{equ_events_3_m3_2}
\vspace*{-2mm}
\end{align} 
On the other hand, we can use Assumption \ref{aspt_eig_val} to obtain (via \eqref{equ_comp_1} and \eqref{equ_comp_2}) the upper 
bound\footnote{The variance $\tilde{\sigma}^{2}_{\blkidx}$ of (the i.i.d.\ entries of) the random vector $\tilde{\vx}^\blknot_{i}$ does not 
exceed the variance of (the i.i.d.\ entries of) the random vector $\sum_{j \in \mathcal{N}(i)} a_{j} \vx_j^\blknot$ due to the (orthogonal) 
decomposition \eqref{equ_comp_1}. The variance of $\sum_{j \in \mathcal{N}(i)} a_{j} \vx_j^\blknot$ is, in turn, upper bounded by the 
variance of the random vector $\vx_i^\blknot$  due to the (orthogonal) decomposition \eqref{equ_comp_2}. } 
\begin{equation} 
\label{equ_bound_simga_2_blokdx}
\tilde{\sigma}^{2}_{\blkidx} \leq \beta. 
\end{equation} 
It will be convenient to stack the vectors $\tilde{\vx}_{i} ^\blknot$ (cf. \eqref{equ_distribution_tilde_vx}) into a single Gaussian random vector 
\begin{align}
\label{equ_def_vx_i_long}
\tilde{\vx}_{i} & \defeq \big( \big( \tilde{\vx}_{i}^{(1)} \big)^{T},\ldots, \big( \tilde{\vx}_{i}^{(\nrblocks)} \big)^{T} \big) \sim \mathcal{N}(\mathbf{0},\mathbf{C}_{\tilde{x}_{i}}) \nonumber \\
& \mbox{with } \mathbf{C}_{\tilde{x}_{i}} = 
{\rm blkdiag} \{\tilde{\sigma}^{2}_{\blkidx} \mathbf{I}_{\blockLength}\}_{\blkidx=1}^{\nrblocks}.
\end{align}

The decompositions \eqref{equ_comp_2} and \eqref{equ_comp_1} suggest a simple strategy for estimating 
(selecting) the neighbourhoods $\mathcal{N}(i)$ of the nodes $i \in \nodes$ in the CIG $\cig$. To this end, 
let $\mP_{\setT^{\perp}}^{\blknot} \in \mathbb{R}^{\blocklen \times \blocklen}$ 
denote the orthogonal projection matrix for the complement of the subspace 
$\mathcal{X}^{\blknot}_{\set T} \defeq {\rm span} \big\{ \vx_{j}^{\blknot} \big\}_{j \in \setT} \subseteq \mathbb{R}^{\blocklen}$, i.e., 
\begin{equation} 
\label{equ_def_P_b_setT}
\mP_{\setT^{\perp}}^{(\blkidx)} \defeq \mathbf{I}-  \mP_{\setT}^{(\blkidx)}  \mbox{, with } \mP_{\setT}^{(\blkidx)} 
\defeq \sum_{j=1}^{{\rm dim} \mathcal{X}^{\blknot}_{\setT}} \mathbf{u}^{\blknot}_{j} \big( \mathbf{u}^{\blknot}_{j}\big)^{T}, 
\end{equation}
with $\big\{  \mathbf{u}^{\blknot}_{j} \big\}_{j=1}^{{\rm dim} \mathcal{X}_{\setT}}$ being an orthonormal basis for the 
subspace $\mathcal{X}^{\blknot}_{\setT} \subseteq \mathbb{R}^{\blocklen}$. The matrix $\mP_{\setT}^{(\blkidx)}$ 
in \eqref{equ_def_P_b_setT} is an orthogonal projection matrix on the subspace $\mathcal{X}^{\blknot}_{\setT}$. 

According to \eqref{equ_comp_2}, for any index set $\setT \supseteq \mathcal{N}(i)$ 
(such that $\mathcal{N}(i) \setminus \mathcal{T}\!=\!\emptyset$), 
\begin{equation} 
\label{equ_overlap_superset_power}
 \|\mP_{\setT^{\perp}}^\blknot \vx_i^\blknot \|_2^2 =  \|\mP_{\setT^{\perp}}^\blknot  \bfep_{i}^\blknot \|_2^2 
 \mbox{ for all } \blkidx \in \{1,\ldots,\nrblocks\}.
\end{equation} 
On the other hand, for any index set $\mathcal{T}$ with $\mathcal{N}(i) \setminus \mathcal{T} \neq \emptyset$, \eqref{equ_comp_1} entails 
\begin{equation} 
\label{equ_overlap_non_empty_power}
 \|\mP_{\setT^{\perp}}^\blknot \vx_i^\blknot \|_2^2 =   \| \mP_{\setT^{\perp}}^\blknot  
 (\tilde{\vx}_i^\blknot + \bfep_{i}^\blknot) \|_2^2 \mbox{ for all } \blkidx \in \{1,\ldots,\nrblocks\}, 
\end{equation} 
with some random vector $\tilde{\vx}_i^\blknot$. 
Some of our efforts go into showing that 
\begin{equation} 
\label{equ_approx_ident_mP} \nonumber
\| \mP_{\setT^{\perp}}^\blknot  (\tilde{\vx}_i^\blknot + \bfep_{i}^\blknot) \|_2^2 \approx  \| \mP_{\setT^{\perp}}^\blknot  \tilde{\vx}_i^\blknot  \|_2^2 
+ \| \mP_{\setT^{\perp}}^\blknot   \bfep_{i}^\blknot  \|_2^2,  
\end{equation} 
for all $\blkidx \in \{1,\ldots,\nrblocks\}$. 
Thus, according to \eqref{equ_overlap_superset_power} and \eqref{equ_overlap_non_empty_power}, if the component 
$\tilde{\vx}_i^\blknot$ in \eqref{equ_comp_1} is not too small, the estimator 
\vspace*{-2mm}
\begin{align}
\label{equ_neigborhood_est}
\hspace*{-5mm} \widehat{\mathcal{N}}(i) & \!\defeq\! 
\argmin_{|\setT| \leq \sparsity}  (1/\samplesize) \sum\limits_{\blkidx=1}^{\nrblocks} \|\mP_{\setT^{\perp}}^{\blknot} \vx_i^{\blknot}\|_2^2 + \lambda | \setT|,
\vspace*{-2mm}
\end{align} 
delivers the true neighbourhood, i.e., $\widehat{\mathcal{N}}(i)=\mathcal{N}(i)$, with high probability. 
The penalty term $\lambda | \setT|$ in \eqref{equ_neigborhood_est} is required since we allow nodes $i \in \nodes$ 
in the CIG to potentially have fewer than $\sparsity$ neighbours ($|\mathcal{N}(i)| < \sparsity$).\footnote{
In contrast to our approach \eqref{equ_neigborhood_est}, the analysis of GMS presented in \cite{Wain2009TIT}, 
for the special case of i.i.d.\ samples, requires all neighbourhoods $\mathcal{N}(i)$ to have exactly the same size $\sparsity$, 
i.e., $\mathcal{N}(i)\!=\!\sparsity$ for all $i\!\in\!\nodes$.} 
Indeed, the statistic $\|\mP_{\setT^{\perp}}^{\blknot} \vx_i^{\blknot}\|_2^2$ does not allow to distinguish 
between different sets $\setT$ which contain the neighborhood $\mathcal{N}(i)$. Therefore, we need to add the 
penalty term $\lambda | \setT|$ in \eqref{equ_neigborhood_est} in order to prefer smaller sets $\setT$ as an estimate 
for $\mathcal{N}(i)$. 

The estimator \eqref{equ_neigborhood_est} performs sparse block-wise least squares regression by approximating the 
$i$th component $\vx_{i}$ (cf.\ \eqref{equ_component_i}) in a sparse manner (by allowing only $\sparsity$ active components) 
using the remaining process components. Indeed, the summands $\|\mP_{\setT^{\perp}}^{\blknot} \vx_i^{\blknot}\|_2^2$ 
in \eqref{equ_neigborhood_est} are the errors obtained from the block-wise regression problems
\begin{equation} 
\nonumber 
 \big\|\mP_{\setT^{\perp}}^{\blknot} \vx_i^{\blknot} \big\|_2^2 
 = \min_{\vw^{\blknot}\!\in\!\mathbb{R}^{\coeffdim}, w^{\blknot}_{i}\!=\!0} \big\| \vx_{i}^{\blknot} - \sum_{j \in \setT} w^{\blknot}_{j} \vx_{j}^{\blknot} \big\|^{2}_{2}. 
\end{equation} 
 
We highlight that the estimator \eqref{equ_neigborhood_est} is mainly useful as a theoretical device which allows for a 
simple performance analysis and, in turn, a characterization of the required sample size for accurate GMS. Using  
a naive implementation of \eqref{equ_neigborhood_est}, by searching over all subsets of $\{1,\ldots,\coefflen\}$ with size 
at most $\sparsity$, has a complexity which grows exponentially in the sparsity level $\sparsity$. Thus, the estimator 
\eqref{equ_neigborhood_est} is typically intractable except for very small sparsity levels $\sparsity$ (corresponding to 
a very sparse CIG). More tractable methods for GMS can be obtained by using convex relaxations of \eqref{equ_neigborhood_est} 
which result in Lasso-type methods (see \cite{DanaherGroupGLASSO2014} and Section \ref{sec_chain}).

The estimator \eqref{equ_neigborhood_est} itself only delivers an estimate for the neighbourhood of some node $i \in \nodes$ in the 
CIG $\cig$ underlying the process \eqref{equ_proc_model}. In order to obtain an estimate of the entire CIG, 
we have to repeatedly apply the estimator \eqref{equ_neigborhood_est} to each node $i \in \nodes$. It might then happen that 
due to estimation errors, we obtain $i \in \widehat{\mathcal{N}}(j)$ but $j \notin \widehat{\mathcal{N}}(i)$ for two different 
nodes $i,j \in \nodes$. There are different options how to handle such a situation such as insisting in consistency between 
the neighbourhoods when declaring the presence of an edge (see \cite[Eq. (7)]{MeinBuhl2006}). 
However, the implementation details for handling such cases are not relevant to our analysis, 
which aims at sufficient conditions such that \eqref{equ_neigborhood_est} delivers the correct neighborhood 
for all nodes simultaneously (with high probability). 

Our main result is an upper bound on the probability of the sparse neighbourhood regression \eqref{equ_neigborhood_est} 
to fail in delivering the correct neighbourhood $\mathcal{N}(i)$, i.e., the error event
\begin{equation} 
\label{equ_def_error_no_superset}
\error \defeq \{ \mathcal{N}(i) \neq \widehat{\mathcal{N}}(i) \}.
\end{equation}  
\begin{theorem}
\label{thm_upper_bound_err_prob}
Consider the vector samples $\vx[\timeidx]$, for $\timeidx=1,\ldots,\samplesize$, conforming to the process model 
\eqref{equ_proc_model} and such that Assumption \ref{aspt_minimum_par_cor}, \ref{aspt_cig_sparse} and \ref{aspt_eig_val} 
are valid. We estimate the neighbourhood $\mathcal{N}(i)$ of an arbitrary but fixed node $i \in \nodes$ in the CIG via 
sparse regression \eqref{equ_neigborhood_est} with $\lambda = \rho^{2}_{\rm min}/6$. Then, if the average connection strength 
between connected components are sufficiently large such that (see \eqref{equ_rho_min_aspt})
\vspace*{1mm}
\begin{equation}
\label{equ_condition_rho_min_kappa}
\rho^{2}_{\rm min} \geq 24 \beta / \blocklen
\vspace*{1mm}
\end{equation} 
for any sample size 
\vspace*{1mm}
\begin{equation}
\label{equ_suff_cond_ell_3_thm}
\samplesize \!\geq\! 864   (\beta/\rho^{2}_{\rm min}) \log( 6 \coeffdim  \sparsity^2 /\eta),
\vspace*{2mm}
\end{equation} 
the probability of the error event \eqref{equ_def_error_no_superset} is bounded as $\prob \{ \error \} \leq \eta$. 
\end{theorem} 
By Theorem \ref{thm_upper_bound_err_prob}, the true neighbourhood $\mathcal{N}(i)$ of a node $i \in \nodes$ 
can be recovered via \eqref{equ_neigborhood_est} with high probability if the samples size $\samplesize$ is on 
the order of $(\beta/\rho^{2}_{\rm min}) \log(\procdim \sparsity^{2})$ (for a fixed error tolerance $\eta$). Therefore, 
given sufficiently large computational power, GMS via sparse neighbourhood regression \eqref{equ_neigborhood_est} 
is feasible in the high dimensional regime where $\samplesize \ll \procdim$. 

Since a CIG $\cig$ is entirely determined by the neighbourhoods $\mathcal{N}(i)$ of all nodes $i\in \nodes$, 
we obtain the following result on GMS as a direct consequence of Theorem \ref{thm_upper_bound_err_prob}. 
\begin{corollary}
\label{cor_upper_bound_err_prob}
Consider a process \eqref{equ_proc_model} with underlying CIG $\cig$ and satisfying all the assumptions 
in Theorem \ref{thm_upper_bound_err_prob}. 
Then, for any sample size 
\vspace*{1mm}
\begin{equation}
\label{equ_suff_cond_ell_3}
\samplesize \!\geq\! 864   (\beta/\rho^{2}_{\rm min}) \log( 6 \coeffdim^{2}  \sparsity^2 /\eta),
\vspace*{2mm}
\end{equation} 
there is a GMS method delivering a CIG estimate $\widehat{\cig}$ with $\prob \big\{ \widehat{\cig} \neq \cig  \big\} \leq \eta$. 
\end{corollary} 
\begin{proof}
Using \eqref{equ_neigborhood_est}, we compute an estimate $\widehat{\mathcal{N}}(i)$ for each node $i \in \nodes$. 
Then, we construct a CIG estimate $\widehat{\cig}$ having an edge $\{i,j\}$ between nodes $i,j \in \nodes$ when $j\!\in\!\widehat{N}(i)$ 
and $i \in \mathcal{N}(j)$. The estimate $\widehat{\cig}$ is correct, i.e., $\widehat{\cig} = \cig$ whenever all of the estimates 
$\widehat{\mathcal{N}}(i)$ are correct, i.e., $\widehat{\mathcal{N}}(i) = \mathcal{N}(i)$ for each node $i \in \nodes$. The result 
then follows by combining Theorem \ref{thm_upper_bound_err_prob} with a union bound (over all nodes $i \in \nodes$). 
\end{proof} 
The bound \eqref{equ_suff_cond_ell_3} indicates that accurate GMS (with prescribed small error rate $\eta$) from time series 
data conforming to the model \eqref{equ_proc_model} is possible for a sample size $\samplesize \propto (1/\rho^{2}_{\rm min}) \log (\coeffdim \sparsity)$. 
We note that the bound \eqref{equ_suff_cond_ell_3} improves existing bounds on the sample size required for particular GMS methods based on 
convex optimization \cite{LeeLiu2015}. In particular, while the results in \cite{LeeLiu2015} indicate that the 
sample size $\samplesize$ required for GMS scales with $\log \coeffdim$, they do not provide the explicit 
dependence of $\samplesize$ on the guaranteed error rate $\prob \big\{ \widehat{\cig} \neq \cig  \big\}$, sparsity $\sparsity$ 
(see \eqref{equ_sparsity_aspt}) and minimum connection strength $\rho^{2}_{\rm min}$ (see \eqref{equ_rho_min_aspt}).

It turns out that the bound \eqref{equ_suff_cond_ell_3} is sharp since it matches a fundamental lower bound on the required 
sample size for any GMS method which performs uniformly well for any process of the form \eqref{equ_proc_model} and 
satisfying Assumption \ref{aspt_minimum_par_cor}, \eqref{aspt_cig_sparse} and \ref{aspt_eig_val}. This lower bound follows 
directly from the results in \cite{HannakJung2014conf}. 
\begin{lemma}{\cite[Theorem 3.1]{HannakJung2014conf}}
\label{lem_lower_bound}
Consider a GMS method which reads in vector samples $\vx[1],\ldots,\vx[\samplesize]$ (see \eqref{equ_proc_model}) and 
delivers an estimate $\widehat{\cig}$ for the CIG $\cig$ between the components $\component{i}$, for $i=1,\ldots,\featurelen$ 
(see \eqref{equ_component_i}). If the method achieves an error probability $\prob \{ \error \}$ uniformly bounded by some 
prescribed error level $\eta$ for any process of the form \eqref{equ_proc_model} satisfying Assumption \ref{aspt_minimum_par_cor}, 
\ref{aspt_cig_sparse} and \ref{aspt_eig_val} with $\rho^{2}_{\rm min} \leq 1/4$, then the sample size must necessarily satisfy 
$\samplesize > \frac{ \log \genfrac(){0pt}{2}{\featurelen}{2} -1}{4 \rho_{\rm min}^{2}}$. 
\end{lemma} 
Combining Theorem \ref{thm_upper_bound_err_prob} with Lemma \ref{lem_lower_bound}, we conclude that the bound 
\eqref{equ_suff_cond_ell_3} characterizes, up to a constant factor, the minimum required sample size for accurate GMS 
based on processes of the form \eqref{equ_proc_model}. 

It is instructive to compare the sufficient condition \eqref{equ_suff_cond_ell_3} on the sample size $\samplesize$ with the 
results obtained in \cite{RavWainRaskYu2011,Wain2009TIT,WangWain2010} for the special case of i.i.d.\ samples, which 
coincides with the model \eqref{equ_proc_model} for $\nrblocks\!=\!1$ and $\samplesize\!=\!\blocklen$. We note that for this 
special case, the bound \eqref{equ_suff_cond_ell_3} matches the necessary condition on sample size derived in \cite{WangWain2010}, 
which confirms the sparse regression method \eqref{equ_neigborhood_est} to be optimal in terms of sample size requirement. 
However, this is already certified by Lemma \ref{lem_lower_bound}, which is extends the results of \cite{WangWain2010} 
to non-stationary processes \eqref{equ_proc_model}. 

At first sight it appears that the bound \eqref{equ_suff_cond_ell_3} suggests a smaller required sample size compared to the bound 
$\samplesize \!\propto\! \sparsity^{2} \log \featurelen$ obtained in \cite[Corollary 1]{RavWainRaskYu2011}. 
However, it is important to note that the lower bound $\rho^{2}_{\rm min}$ (see \eqref{equ_rho_min_aspt}) on the minimum connection strength 
$\rho^{2}_{i,j}$ (between connected components) cannot be chosen arbitrarily in order to have at least one 
process \eqref{equ_proc_model} satisfying Assumption \ref{aspt_minimum_par_cor}. In particular, the off-diagonal entries $K_{i,j}[\timeidx]$ 
of the precision matrices cannot take on arbitrary (large) values, since the precision matrix $\mathbf{K}[\timeidx]\!=\!\big(\mC[\timeidx]\big)^{-1}$ 
(see \eqref{equ_def_cov_matrix}) must be positive definite. 

A practically relevant regime for the minimum connection strength is $\rho^{2}_{\rm min} \leq c/\sparsity^2$ with some constant $c$ which 
may depend on $\beta$ (see \eqref{equ_bounds_eigvals_asspt3}). For this regime, which is also considered in \cite{WangWain2010}, 
the bound \eqref{equ_suff_cond_ell_3} becomes $\samplesize \propto \sparsity^{2} \log \featurelen$ which closely resembles the sample 
size requirement for the convex GMS method in \cite{RavWainRaskYu2011}. 

Finally, we note that Theorem \ref{thm_upper_bound_err_prob} does not involve some incoherence condition, which requires sub-matrices 
of the covariance matrices $\mC^{(\blkidx)}$ (see \eqref{equ_cov_matrix_blocks}) to be well-conditioned. Such incoherence conditions 
are typically required by convex relaxations of the sparse regression estimator \eqref{equ_neigborhood_est}. While convex (Lasso-based) methods 
are computationally more tractable than non-convex estimators such as \eqref{equ_neigborhood_est}, convex methods place more stringent 
conditions (such as some incoherence condition) on the process \eqref{equ_proc_model} in order to guarantee accurate estimation of the 
underlying CIG \cite{Wain2009TITSharpThresholds,Loh2017}.

%


\vspace*{-0mm}
\section{Numerical Results} 
\vspace*{-0mm}
\label{sec_numerical_results}
We verify the predictions of Theorem \ref{thm_upper_bound_err_prob} by means of numerical experiments involving 
synthetic data (see Section \ref{sec_chain}) and data collected by pedestrian count devices located in the city of 
Turku in Finland (see Section \ref{ped_counts}). We also compare our results with the empirical performance obtained 
from computationally efficient convex optimization methods (see Section \ref{sec_chain}). In order to support reproducible 
research, we have made the source code for our experiments available under \url{https://github.com/alexjungaalto/ResearchPublic/tree/master/GMSNonStat}. 


\subsection{Chain}
\label{sec_chain}
Our first experiment revolves around a synthetic dataset $\vx[\timeidx]$ which is generated according to the process model 
\eqref{equ_proc_model} such that the true underlying CIG $\cig$ is a chain graph as depicted in Figure \ref{chainGraph}.

\begin{figure}[ht]
\centering
\begin{tikzpicture}[x=1cm,y=1cm]
 \node[draw,circle,thick] (x1) at (1,0.6) {$\vx_1$};
  \node[draw,circle,thick] (x2) at (3,0.6) {$\vx_2$};
   \node[draw,circle,thick] (x3) at (5,0.6) {$\vx_3$};
    \node[draw,circle,thick] (x4) at (7,0.6) {$\vx_4$};
        \node[draw,circle,thick] (xp) at (9,0.6) {$\vx_{\procdim}$};
     \draw[-,thick]    (x1) -- (x2);
       \draw[-,thick]    (x2) -- (x3);
       \draw[-,thick]    (x3) -- (x4);
         \draw[-,thick,dashed]    (x4) -- (xp);  
\end{tikzpicture}
\caption{The CIG of a process \eqref{equ_proc_model} with a chain structure.}
\label{chainGraph}
\end{figure}

In particular, we generated Gaussian random vectors conforming to the process model \eqref{equ_proc_model} 
with $\nrblocks=4$ blocks. The $\blkidx$-th block consists of $L$ i.i.d.\ random vectors $\vx[\timeidx]\!\sim\!\mathcal{N}(\mathbf{0},\mathbf{C}^{(\blkidx)})$ 
with $\mathbf{C}^{(\blkidx)}$ being chosen such that the marginal CIG $\cig^{(\blkidx)}$ is a chain (see Figure \ref{equ_proc_model}) 
with the edge $\{\blkidx,\blkidx+1\}$ missing (see Figure \ref{chainGraph_block}).  

\begin{figure}[ht]
\centering
\begin{tikzpicture}[x=0.8cm,y=1cm]
\node[draw,circle,thick] (x1) at (1,0.5) {$\vx^{(1)}_1$};
\node[draw,circle,thick] (x2) at (3,0.5) {$\vx^{(1)}_2$};
\node[draw,circle,thick] (x3) at (5,0.5) {$\vx^{(1)}_3$};
\node[draw,circle,thick] (x4) at (7,0.5) {$\vx^{(1)}_4$};
\node[draw,circle,thick] (xp) at (9,0.5) {$\vx^{(1)}_{\procdim}$};
\node[draw,circle,thick] (x12) at (1,-1.5) {$\vx^{(2)}_1$};
\node[draw,circle,thick] (x22) at (3,-1.5) {$\vx^{(2)}_2$};
\node[draw,circle,thick] (x32) at (5,-1.5) {$\vx^{(2)}_3$};
\node[draw,circle,thick] (x42) at (7,-1.5) {$\vx^{(2)}_4$};
\node[draw,circle,thick] (xp2) at (9,-1.5) {$\vx^{(2)}_{\procdim}$};
\node[] at (5,-2.5) {$\vdots$};
     \draw[-,thick]    (x2) -- (x3);
       \draw[-,thick]    (x3) -- (x4);
         \draw[-,thick,dashed]    (x4) -- (xp);  
         
      \draw[-,thick]    (x12) -- (x22);
       \draw[-,thick]    (x32) -- (x42);
         \draw[-,thick,dashed]    (x42) -- (xp2);  
\end{tikzpicture}
\caption{The marginal CIG $\cig^{(\blkidx)}$ underlying the $\blkidx$-th block, constituted by the i.i.d. 
samples $\vx[(\blkidx\!-\!1)\blocklen\!+\!1],\ldots,\vx[\blkidx\blocklen]$, is a chain with the edge $\{\blkidx,\blkidx+1\}$ removed.}
\label{chainGraph_block}
\end{figure}
  

In order to estimate the neighbourhood $\mathcal{N}(i)$ of a given node $i \in \nodes$ in the CIG from the generated vector samples, 
we use the sparse regression estimator \eqref{equ_neigborhood_est} with $\sparsity=2$. For such a small sparsity, It is still feasible to 
compute the estimator \eqref{equ_neigborhood_est} by exhaustive search over all subsets of size at most $\sparsity=2$. However, for 
larger sparsity $\sparsity$, \eqref{equ_neigborhood_est} becomes intractable and one has to use computationally cheaper 
methods such as convex optimization methods \cite{DanaherGroupGLASSO2014,DistrOptStatistLearningADMM}. 
 
We estimate the error probability \eqref{equ_def_error_no_superset} by an empirical average $\widehat{\prob}\{\error\}$ over $K=100$ 
i.i.d.\ simulation runs. In particular, using the $j$-th realization of the process \eqref{equ_proc_model} as input to the sparse regression 
estimator \eqref{equ_neigborhood_est}, yielding the estimate $\widehat{\mathcal{N}}^{(j)}(i)$, we compute the empirical error rate 
\begin{equation}
\label{equ_def_error_rate}
\widehat {\prob} \{ \error \} \defeq (1/K) \sum_{j=1}^{K} \mathcal{I}( \widehat{\mathcal{N}}^{(j)}(i) \neq \mathcal{N}(i)) (\approx \prob \{ \error \}). 
\end{equation}
Here, $\hat{\mathcal{N}}^{(j)}(i)$ is the estimated neighbourhood of node $i\!\in\!\nodes$ during 
the $j$-th simulation run. 


\begin{figure}[htbp]
\begin{minipage}[t]{0.7\columnwidth}
\begin{tikzpicture}
 \tikzset{x=5cm,y=3cm,every path/.style={>=latex},node style/.style={circle,draw}}
      \node [anchor=south] at (0,1.1) {$P_{\rm err}$};
           \foreach \label/\labelval in {0/$0$,0.25/$0.25$,0.5/$0.5$,0.75/$0.75$,1/$1$}
        { 
          \draw (1pt,\label) -- (-1pt,\label) node[left] {\labelval};
        }       
             \foreach \label/\labelval in {0/$0$,0.25/$0.25$,0.5/$0.5$,0.75/$0.75$,1/$1$} 
        { 
          \draw (\label,1pt) -- (\label,-2pt) node[below] {\labelval};
        }
          \draw[->] (0,0) -- (1.2,0);
      \node [] at (0.5,-0.3) {$\samplesize/500$};
      \draw[->] (0,0) -- (0,1.1);
      
    \csvreader[ head to column names,%
                late after head=\xdef\aold{\x}\xdef\bold{\y},%
                after line=\xdef\aold{\x}\xdef\bold{\y}]%
                {RawSample_p01_task00.csv}{}  
                {\draw [line width=0.2mm] (\aold/500, \bold) -- (\x/500,\y) node {\large $+$};
    }
        \csvreader[ head to column names,%
                late after head=\xdef\aold{\x}\xdef\bold{\y},%
                after line=\xdef\aold{\x}\xdef\bold{\y}]%
                {RawSample_p02_task00.csv}{}  
                {\draw [line width=0.2mm] (\aold/400, \bold) -- (\x/400,\y) node {\large $\times$};
    }
        \csvreader[ head to column names,%
                late after head=\xdef\aold{\x}\xdef\bold{\y},%
                after line=\xdef\aold{\x}\xdef\bold{\y}]%
                {RawSample_p03_task00.csv}{}  
                {\draw [line width=0.2mm] (\aold/400, \bold) -- (\x/400,\y) node {\large $\circ$};
    }
        \csvreader[ head to column names,%
                late after head=\xdef\aold{\x}\xdef\bold{\y},%
                after line=\xdef\aold{\x}\xdef\bold{\y}]%
                {RawSample_p04_task00.csv}{}  
                {\draw [line width=0.2mm] (\aold/400, \bold) -- (\x/400,\y) node {\large $\star$};
    }
    \node [] at (0.5,-0.7) {(a)};
        \vspace*{-3mm}
\end{tikzpicture}
\end{minipage}

\begin{minipage}[t]{0.7\columnwidth}
\begin{tikzpicture}
 \tikzset{x=5cm,y=3cm,every path/.style={>=latex},node style/.style={circle,draw}}
       \node [anchor=south] at (0,1.1) {$P_{\rm err}$};
           \foreach \label/\labelval in {0/$0$,0.25/$0.25$,0.5/$0.5$,0.75/$0.75$,1/$1$}
        { 
          \draw (1pt,\label) -- (-1pt,\label) node[left] {\labelval};
        }       
             \foreach \label/\labelval in {0/$0$,0.25/$0.25$,0.5/$0.5$,0.75/$0.75$,1/$1$} 
        { 
          \draw (\label,1pt) -- (\label,-2pt) node[below] {\labelval};
        }
          \draw[->] (0,0) -- (1.2,0);
      \node [] at (0.5,-0.3) {$\samplesize'/100$};
         \node [] at (0.5,-0.7) {(b)};
      \draw[->] (0,0) -- (0,1.1);
      
    \csvreader[ head to column names,%
                late after head=\xdef\aold{\x}\xdef\bold{\y},%
                after line=\xdef\aold{\x}\xdef\bold{\y}]%
                {ScaledSample_p64_task00.csv}{}
                {\draw [line width=0.2mm] (\aold/100, \bold) -- (\x/100,\y) node {\large $+$};
    }
        \csvreader[ head to column names,%
                late after head=\xdef\aold{\x}\xdef\bold{\y},%
                after line=\xdef\aold{\x}\xdef\bold{\y}]%
                {ScaledSample_p128_task00.csv}{}
                {\draw [line width=0.2mm] (\aold/100, \bold) -- (\x/100,\y) node {\large $\times$};
    }
        \csvreader[ head to column names,%
                late after head=\xdef\aold{\x}\xdef\bold{\y},%
                after line=\xdef\aold{\x}\xdef\bold{\y}]%
                {ScaledSample_p256_task00.csv}{}
                {\draw [line width=0.2mm] (\aold/100, \bold) -- (\x/100,\y) node {\large $\circ$};
    }
            \csvreader[ head to column names,%
                late after head=\xdef\aold{\x}\xdef\bold{\y},%
                after line=\xdef\aold{\x}\xdef\bold{\y}]%
                {ScaledSample_p512_task00.csv}{}
                {\draw [line width=0.2mm] (\aold/100, \bold) -- (\x/100,\y) node {\large $\star$};
    }
         
    \vspace*{-3mm}
\end{tikzpicture}
\end{minipage}
        \vspace*{-3mm}
  \caption{Empirical error rate $P_{\rm err} = \widehat{{\rm P}}\big\{ \widehat{\mathcal{N}}(2) \neq \mathcal{N}(2) \big\}$ 
  (see \eqref{equ_def_error_rate}) incurred by \eqref{equ_neigborhood_est} when estimating the neighbourhood 
  $\mathcal{N}(2)$ of node $i=2$ in a chain CIG with $\coeffdim=64$ (``$+$''), $\coeffdim=128$ (``$\times$''), 
  $\coeffdim=256$ (``$\circ$'') and $\coeffdim=512$ (``$\star$'') nodes. (a) Error rate as a function of the  
  sample size $\samplesize$. (b) Error rate as a function of scaled sample size $\samplesize'=\samplesize \rho_{\rm min}^{2}/ \log \procdim$. 
  Error rate has been obtained using $K=100$ simulation runs.} \label{fig:edgeCase}
\end{figure}


In Figure \ref{fig:edgeCase}-(a) we depict the error rate $\widehat {\prob} \{ \error \}$, achieved by the 
estimator \eqref{equ_proc_model} when estimating the neighbourhood or node $i=2$ (see Figure \ref{chainGraph}), 
as a function of the sample size $\samplesize$. The three curves in Figure \ref{fig:edgeCase}-(a) corresponds 
to three different processes. Each process is of the form \eqref{equ_proc_model} with CIG being a chain 
(see Figure \ref{chainGraph}), but with different $\procdim$ and $\rho_{\rm min}^{2}$ (see \eqref{equ_rho_min_aspt}). 

As indicated by the upper bound \eqref{equ_suff_cond_ell_3_thm} of Theorem \ref{thm_upper_bound_err_prob}, the 
error rate $\widehat {\prob} \{ \error \}$ (see \eqref{equ_def_error_rate}) crucially depends on the scaled sample 
size $\samplesize' \defeq \samplesize \rho_{\rm min}^{2}/ \log \procdim$. Therefore, we plot in Figure \ref{fig:edgeCase}-(b) 
the error rate $\widehat {\prob} \{ \error \}$ as a function of the scaled sample size $\samplesize'$. In agreement with our 
theoretical findings, we observe that the curves in Figure \ref{fig:edgeCase}-(b) are almost lying on top of each other. 

The sparse regression estimator \eqref{equ_neigborhood_est} implements a form of pooling of the samples 
$\vx[1],\ldots,\vx[\samplesize]$ across different blocks. Indeed, the objective function in \eqref{equ_neigborhood_est} 
sums up the contributions from all blocks such that the required sample size depends on the average connection strength 
\eqref{equ_partial_correlation_def}. A  simple alternative approach would be to consider the samples 
of each block in \eqref{equ_proc_model} as i.i.d. samples from a marginal CIG $\cig^{(\blkidx)}$ and apply 
existing GMS methods for i.i.d.\ samples to obtain estimates for the marginal CIGs. We can then obtain an 
estimate for the global CIG $\cig$ by using the union of the edge sets in each marginal CIG $\cig^{(\blkidx)}$. 
More precisely, a ``naive'' estimate $\widehat{\mathcal{N}}^{(\rm naive)}(i)$ for the neighborhood $\mathcal{N}(i)$ of some node 
$i \in \nodes$ can be obtained from the union of the block-wise neighborhood estimates 
$\widehat{\mathcal{N}}^{(\blkidx)}(i)$, for $\blkidx\!=\!1,\ldots,\nrblocks$.

In Figure \ref{fig:clime}, we compare the error rate achieved by our pooled approach to this naive approach. 
In particular, we use a constrained $\ell_1$ minimization approach (referred to as ``CLIME'') to estimate the support of the 
sparse precision matrix \cite{clime} within each block. From Figure \ref{fig:clime} we obtain that the pooled 
approach \eqref{equ_neigborhood_est} clearly outperforms the naive approach. This result should not come 
as a surprise since the pooled estimator \eqref{equ_neigborhood_est} allows to cope with few blocks with very 
small connection strength (of connected nodes in the CIG) as long as the average connection strength 
(see \eqref{equ_partial_correlation_def}) is large enough. In contrast, the naive approach is likely to fail if there 
is at least one block of samples which does not allow accurate GMS. 

\begin{figure}[htbp]
\begin{center}
\begin{tikzpicture}
 \tikzset{x=5cm,y=3cm,every path/.style={>=latex},node style/.style={circle,draw}}
      \node [anchor=south] at (0,1.1) {$P_{\rm err}$};
           \foreach \label/\labelval in {0/$0$,0.25/$0.25$,0.5/$0.5$,0.75/$0.75$,1/$1$}
        { 
          \draw (1pt,\label) -- (-1pt,\label) node[left] {\labelval};
        }       
             \foreach \label/\labelval in {0/$0$,0.25/$0.25$,0.5/$0.5$,0.75/$0.75$,1/$1$} 
        { 
          \draw (\label,1pt) -- (\label,-2pt) node[below] {\labelval};
        }
          \draw[->] (0,0) -- (1.2,0);
      \node [] at (0.5,-0.3) {$\samplesize'/100$};
      \draw[->] (0,0) -- (0,1.1);
      
    \csvreader[ head to column names,%
                late after head=\xdef\aold{\x}\xdef\bold{\y},%
                after line=\xdef\aold{\x}\xdef\bold{\y}]%
                {ScaledSample_p64_task00.csv}{}
                {\draw [line width=0.2mm] (\aold/100, \bold) -- (\x/100,\y) node {\large $+$};
    }
        \csvreader[ head to column names,%
                late after head=\xdef\aold{\x}\xdef\bold{\y},%
                after line=\xdef\aold{\x}\xdef\bold{\y}]%
                {ScaledSample_p128_task00.csv}{}
                {\draw [line width=0.2mm] (\aold/100, \bold) -- (\x/100,\y) node {\large $\times$};
    }
        \csvreader[ head to column names,%
                late after head=\xdef\aold{\x}\xdef\bold{\y},%
                after line=\xdef\aold{\x}\xdef\bold{\y}]%
                {ScaledSample_p256_task00.csv}{}
                {\draw [line width=0.2mm] (\aold/100, \bold) -- (\x/100,\y) node {\large $\circ$};
    }
    
            \csvreader[ head to column names,%
                late after head=\xdef\aold{\x}\xdef\bold{\y},%
                after line=\xdef\aold{\x}\xdef\bold{\y}]%
                {ScaledSample_p512_task00.csv}{}
                {\draw [line width=0.2mm] (\aold/100, \bold) -- (\x/100,\y) node {\large $\star$};
    }
    
            \csvreader[ head to column names,%
                late after head=\xdef\aold{\x}\xdef\bold{\y},%
                after line=\xdef\aold{\x}\xdef\bold{\y}]%
                {CLIMEScaledSample64.csv}{}
                {\draw [line width=0.2mm,dotted] (\aold/100, \bold) -- (\x/100,\y) node {\large $+$};
    }
                \csvreader[ head to column names,%
                late after head=\xdef\aold{\x}\xdef\bold{\y},%
                after line=\xdef\aold{\x}\xdef\bold{\y}]%
                {CLIMEScaledSample128.csv}{}
                {\draw [line width=0.2mm,dotted] (\aold/100, \bold) -- (\x/100,\y) node {\large $\times$};
    }
                \csvreader[ head to column names,%
                late after head=\xdef\aold{\x}\xdef\bold{\y},%
                after line=\xdef\aold{\x}\xdef\bold{\y}]%
                {CLIMEScaledSample256.csv}{}
                {\draw [line width=0.2mm,dotted] (\aold/100, \bold) -- (\x/100,\y) node {\large $\circ$};
    }
                    \csvreader[ head to column names,%
                late after head=\xdef\aold{\x}\xdef\bold{\y},%
                after line=\xdef\aold{\x}\xdef\bold{\y}]%
                {CLIMEScaledSample512.csv}{}
                {\draw [line width=0.2mm,dotted] (\aold/100, \bold) -- (\x/100,\y) node {\large $\star$};
    }
     \vspace*{-3mm}
\end{tikzpicture}
  \end{center}
  \caption{Error rate \eqref{equ_def_error_rate} achieved by estimating the neighborhood $\mathcal{N}(2)$ using the 
  pooled estimator \eqref{equ_neigborhood_est} (solid curves) and by the union of the neighbourhoods obtained by 
  applying CLIME \cite{clime} to each block in \eqref{equ_proc_model} 
  separately and then forming a union (over all blocks $\blkidx\!=\!1,\ldots,\nrblocks$) of all block-wise neighbourhood 
  estimates (dotted curves), }
    \label{fig:clime}
\end{figure}

As pointed out in Section \ref{sec_GMS_neighbrohoud_regr}, the sparse regression method \eqref{equ_neigborhood_est} 
becomes intractable except for very small number $\coeffdim$ of process components in \eqref{equ_proc_model} and 
sparsity level $\sparsity$ of the underlying CIG (see Assumption \ref{aspt_cig_sparse}). A computationally more tractable 
GMS method can be obtained by replacing (relaxing) the non-convex penalty term $\lambda | \setT|$ in \eqref{equ_neigborhood_est} 
by a convex approximation. The group Lasso is obtained by a particular choice for this convex approximation as \cite{BachConsistency2008} 
\begin{equation}
\label{equ_group_Lasso}
\hat{\vw}\!=\!\argmin_{\substack{\vw^{(\blkidx)}\in\mathbb{R}^{\coeffdim} \\ \hspace*{-4mm}w^{(\blkidx)}_{i}=0}} \sum_{\blkidx=1}^{\nrblocks} \big\| \vx^{\blknot}_{i}\!-\! 
\sum_{j =1}^{\coeffdim} w^{(\blkidx)}_{j} \vx^{(\blkidx)}_{j} \big\|_{2}^{2}+ \lambda \sum_{j=1}^{\coeffdim} \| \vw_{j} \|_{2}.
\end{equation} 
\begin{figure}[htbp]
\begin{minipage}[t]{0.7\columnwidth}
\begin{tikzpicture}
 \tikzset{x=5cm,y=3cm,every path/.style={>=latex},node style/.style={circle,draw}}
      \node [anchor=south] at (0,1.1) {$P_{\rm err}$};
           \foreach \label/\labelval in {0/$0$,0.25/$0.25$,0.5/$0.5$,0.75/$0.75$,1/$1$}
        { 
          \draw (1pt,\label) -- (-1pt,\label) node[left] {\labelval};
        }       
             \foreach \label/\labelval in {0/$0$,0.25/$0.25$,0.5/$0.5$,0.75/$0.75$,1/$1$} 
        { 
          \draw (\label,1pt) -- (\label,-2pt) node[below] {\labelval};
        }
          \draw[->] (0,0) -- (1.2,0);
      \node [] at (0.5,-0.3) {$\samplesize/500$};
      \draw[->] (0,0) -- (0,1.1);
      
    \csvreader[ head to column names,%
                late after head=\xdef\aold{\x}\xdef\bold{\y},%
                after line=\xdef\aold{\x}\xdef\bold{\y}]%
                {SampleGLasso_p64_task00_runs100.csv}{}  
                {\draw [line width=0.2mm] (\aold/500, \bold) -- (\x/500,\y) node {\large $+$};
    }
        \csvreader[ head to column names,%
                late after head=\xdef\aold{\x}\xdef\bold{\y},%
                after line=\xdef\aold{\x}\xdef\bold{\y}]%
                {SampleGLasso_p128_task00_runs100.csv}{}  
                {\draw [line width=0.2mm] (\aold/400, \bold) -- (\x/400,\y) node {\large $\times$};
    }
        \csvreader[ head to column names,%
                late after head=\xdef\aold{\x}\xdef\bold{\y},%
                after line=\xdef\aold{\x}\xdef\bold{\y}]%
                {SampleGLasso_p256_task00_runs100.csv}{}  
                {\draw [line width=0.2mm] (\aold/400, \bold) -- (\x/400,\y) node {\large $\circ$};
    }
        \csvreader[ head to column names,%
                late after head=\xdef\aold{\x}\xdef\bold{\y},%
                after line=\xdef\aold{\x}\xdef\bold{\y}]%
                {SampleGLasso_p512_task00_runs100.csv}{}  
                {\draw [line width=0.2mm] (\aold/400, \bold) -- (\x/400,\y) node {\large $\star$};
    }
    \node [] at (0.5,-0.7) {(a)};
        \vspace*{-3mm}
\end{tikzpicture}
\end{minipage}

\begin{minipage}[t]{0.7\columnwidth}
\begin{tikzpicture}
 \tikzset{x=5cm,y=3cm,every path/.style={>=latex},node style/.style={circle,draw}}
       \node [anchor=south] at (0,1.1) {$P_{\rm err}$};
           \foreach \label/\labelval in {0/$0$,0.25/$0.25$,0.5/$0.5$,0.75/$0.75$,1/$1$}
        { 
          \draw (1pt,\label) -- (-1pt,\label) node[left] {\labelval};
        }       
             \foreach \label/\labelval in {0/$0$,0.25/$0.25$,0.5/$0.5$,0.75/$0.75$,1/$1$} 
        { 
          \draw (\label,1pt) -- (\label,-2pt) node[below] {\labelval};
        }
          \draw[->] (0,0) -- (1.2,0);
      \node [] at (0.5,-0.3) {$\samplesize'/150$};
         \node [] at (0.5,-0.7) {(b)};
      \draw[->] (0,0) -- (0,1.1);
      
    \csvreader[ head to column names,%
                late after head=\xdef\aold{\x}\xdef\bold{\y},%
                after line=\xdef\aold{\x}\xdef\bold{\y}]%
                {ScaledSampleGLasso_p64_task00_runs100.csv}{}
                {\draw [line width=0.2mm] (\aold/150, \bold) -- (\x/150,\y) node {\large $+$};
    }
        \csvreader[ head to column names,%
                late after head=\xdef\aold{\x}\xdef\bold{\y},%
                after line=\xdef\aold{\x}\xdef\bold{\y}]%
                {ScaledSampleGLasso_p128_task00_runs100.csv}{}
                {\draw [line width=0.2mm] (\aold/150, \bold) -- (\x/150,\y) node {\large $\times$};
    }
        \csvreader[ head to column names,%
                late after head=\xdef\aold{\x}\xdef\bold{\y},%
                after line=\xdef\aold{\x}\xdef\bold{\y}]%
                {ScaledSampleGLasso_p256_task00_runs100.csv}{}
                {\draw [line width=0.2mm] (\aold/150, \bold) -- (\x/150,\y) node {\large $\circ$};
    }
            \csvreader[ head to column names,%
                late after head=\xdef\aold{\x}\xdef\bold{\y},%
                after line=\xdef\aold{\x}\xdef\bold{\y}]%
                {ScaledSampleGLasso_p512_task00_runs100.csv}{}
                {\draw [line width=0.2mm] (\aold/150, \bold) -- (\x/150,\y) node {\large $\star$};
    }
         
    \vspace*{-3mm}
\end{tikzpicture}
\end{minipage}
        \vspace*{-3mm}
  \caption{Empirical error rate $P_{\rm err} = \widehat{{\rm P}}\big\{ \widehat{\mathcal{N}}^{(\rm gLasso)}(2) \neq \mathcal{N}(2) \big\}$ 
  (see \eqref{equ_def_error_rate}) incurred by the neighborhood estimate \eqref{equ_def_gLasso_est} applied to a process with 
  chain structured CIG of size $\coeffdim=64$ (``$+$''), $\coeffdim\!=\!128$ (``$\times$''), $\coeffdim\!=\!256$ (``$\circ$'') 
  and $\coeffdim\!=\!512$ (``$\star$''). (a) Error rate as a function of the sample size $\samplesize$. (b) Error rate as 
  a function of scaled sample size $\samplesize'=\samplesize / \log \procdim$. Error rate has been obtained using $K=100$ 
  simulation runs.} \label{fig:GLasso}
\end{figure}

To obtain an estimate for the neighbourhood $\mathcal{N}(i)$ from the estimator \eqref{equ_group_Lasso}, we threshold 
the squared block norms $\| \hat{\vw}_{j} \|_{2}^{2}\!=\!\sum_{\blkidx=1}^{\nrblocks} \big( \hat{w}_{j}^{\blknot} \big)^{2}$ at the level 
$\eta\!=\!\rho_{\rm min}^{2}/2$,
\begin{equation}
\label{equ_def_gLasso_est}
\widehat{\mathcal{N}}^{(\rm gLasso)}(i) \defeq \{ j \in \{1,\ldots,\coeffdim\} \setminus \{i\}: \| \hat{\vw}_{j} \|_{2}^{2} \geq \eta \}. 
\end{equation} 
In Figure \ref{fig:GLasso}, we depict the error rate incurred by \eqref{equ_def_gLasso_est} for a process with chain-structured CIG (see Figure \ref{chainGraph}). 
While Figure \ref{fig:GLasso}-(a) shows the error rate as a function of the original sample size $\samplesize$, Figure \ref{fig:GLasso}-(b) 
displays the error rate as a function of the scaled sampled size $\samplesize'=\samplesize \rho_{\rm min}^{2}/ \log \procdim$. In agreement 
with our theoretical findings (see Theorem \ref{thm_upper_bound_err_prob}), the error rate of the estimator \eqref{equ_def_gLasso_est} 
seems to be mainly determined by the scaled sample size $\samplesize'$ as indicated in Figure \ref{fig:GLasso}-(b). 

A comparison of Figure \ref{fig:GLasso}-(b) with Figure \ref{fig:edgeCase}-(b) reveals that the estimator \eqref{equ_def_gLasso_est} 
requires more process samples than the estimator \eqref{equ_neigborhood_est} to ensure a prescribed error rate. However, the 
estimator \eqref{equ_def_gLasso_est} can be implemented by applying computationally efficient convex optimization methods to 
solve the group Lasso \eqref{equ_group_Lasso} (see \cite{DistrOptStatistLearningADMM}). 

\newpage
\subsection{Pedestrian Counts}
\label{ped_counts}

In this experiment we applied the sparse regression estimator \eqref{equ_neigborhood_est} to hourly pedestrian counts 
collected in the city of Turku (Finland). The city operates pedestrian counting devices at certain locations in the city 
center (see Figure \ref{fig:counter}). 
\begin{figure}[htbp]
\begin{center}
\begin{minipage}[t]{0.4\textwidth}
 \includegraphics[width=7cm]{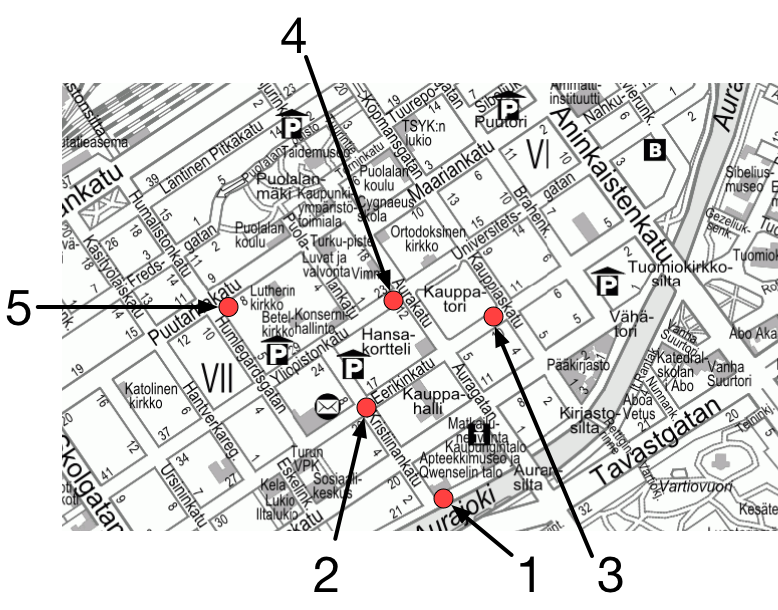}  
 \end{minipage}
\begin{minipage}[t]{0.4\textwidth}
 \includegraphics[width=7cm]{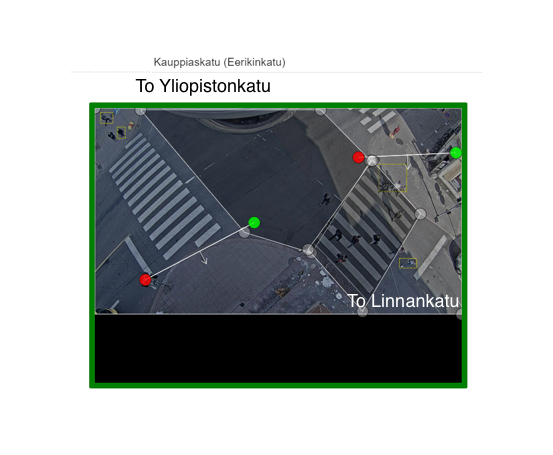}  
 \end{minipage}
\caption{Left: Map of Turku city including the locations of pedestrian count devices (depicted as red dots). Right: Snapshot generated by  
counting device ``$3$'' in order to count the number of pedestrians crossing each of two (virtual) counting lines in a particular direction.}
\label{fig:counter}
\end{center}
\end{figure}
The counting devices (based on cameras) measure the number of pedestrians which pass one of two counting lines in 
a certain direction (see Figure \ref{fig:counter}). 

We have been provided with hourly count data obtained from $\coeffdim=5$ different counting devices located in 
the city center of Turku (see Figure \ref{fig:counter}) and collected since $23$rd of July, $2018$. For each counting 
device, we compute the average count $z^{(i)}[\timeidx]$, for $i\!=\!1,\ldots,\coeffdim$ at time $\timeidx$. We depict the 
average count $z^{(1)}[\timeidx]$ in Figure \ref{fig:pooled2}, which indicates a seasonal component with period $24$. 
This is not too surprising as we expect the pedestrian movements for different days to be similar for the same daytime. 
\begin{figure}[htbp]
\begin{center}
\begin{tikzpicture}
 \tikzset{x=6cm,y=3cm,every path/.style={>=latex},node style/.style={circle,draw}}
    \node [anchor=south] at (0,1.1) {$z^{(1)}[\timeidx]/1000$};
    \foreach \label/\labelval in {0/$0$,0.25/$0.25$,0.5/$0.5$,0.75/$0.75$,1/$1$}
    {\draw (1pt,\label) -- (-1pt,\label) node[left] {\labelval};}       
     \foreach \label/\labelval in {0/$0$,0.25/$0.25$,0.5/$0.5$,0.75/$0.75$,1/$1$} 
     {\draw (\label,1pt) -- (\label,-2pt) node[below] {\labelval};}
     \draw[->] (0,0) -- (1.1,0);
     \node [] at (0.5,-.25) {$\timeidx/240 {\rm h}$};
     \draw[->] (0,0) -- (0,1.1);
    \csvreader[ head to column names,%
                late after head=\xdef\aold{\x}\xdef\bold{\y},%
                after line=\xdef\aold{\x}\xdef\bold{\y}]%
                {Raw_Counts_1.csv}{}
                {\draw [line width=0.5mm] (\aold/240, \bold/1000) -- (\x/240,\y/1000) node {\large $\circ$};}         
         \vspace*{-7mm}
\end{tikzpicture}
  \end{center}
        \vspace*{-7mm}
  \caption{Hourly pedestrian counts (averaged over two counting lines) at location $j=1$ as indicated in Figure \ref{fig:counter}.}
    \label{fig:pooled2}
\end{figure}

In order to remove the seasonal component we difference the time series $z^{(j)}[\timeidx]$
at lag $24$ to obtain the time series (see \cite[Chapter 1.4]{Brockwell91}) 
\begin{equation} 
\label{equ_def_difference}
\tilde{z}^{(i)}[\timeidx] \defeq z^{(i)}[\timeidx\!+\!24]\!-\!z^{(i)}[\timeidx] \mbox{ for } i=1,\ldots,\coeffdim.
\end{equation} 
We depict the time series $\tilde{z}^{(i)}[\timeidx]$ in Figure \ref{fig:diff}, which suggests that is is reasonable to 
model $\tilde{z}^{(i)}[\timeidx]$ as a stationary time series (or discrete time process). 

As discussed in Section \ref{SecProblemFormulation}, we can transform a stationary process into a process 
conforming to our non-stationary model \eqref{equ_proc_model} by applying a DFT. We compute the DFT of 
the difference time series $\tilde{z}^{(i)}[\timeidx]$ (see \eqref{equ_def_difference}) using a period $\samplesize\!=\!3072$ 
to obtain the vector-valued samples 
\begin{align} 
\label{equ_dft_samples}
\vx[\timeidx]&\!=\!\big( x^{(1)}[\timeidx],\ldots, x^{(\coeffdim)}[\timeidx] \big)^{T} \mbox{, with } \nonumber \\ 
x^{(i)}[\timeidx]& \!\defeq\!\sum_{\timeidx'=1}^{\samplesize} \tilde{z}^{(i)}[\timeidx'] \exp \big( - 2 \pi \iota (\timeidx'\!-\!1)(\timeidx\!-\!1)/\samplesize \big) 
\end{align} 
for $\timeidx=1,\ldots,\samplesize$ and $\iota \defeq \sqrt{-1}$. 

\begin{figure}[htbp]
\begin{center}
\hspace*{-5mm}
\begin{tikzpicture}
 \tikzset{x=6cm,y=3cm,every path/.style={>=latex},node style/.style={circle,draw}}
    \node [anchor=south] at (0,0.6) {$\tilde{z}^{(1)}[\timeidx]/1000$};
    \foreach \label/\labelval in  {0/$0$,0.25/$0.25$,0.5/$0.5$,-0.25/$-0.25$,-0.5/$-0.5$} 
    {\draw (1pt,\label) -- (-1pt,\label) node[left] {\labelval};}       
    
     \draw[->] (0,0) -- (1.1,0);
     \node [] at (1.1,-0.1) {$\timeidx/240 {\rm h}$};
     \draw[->] (0,-0.5) -- (0,0.6);
      
    \csvreader[ head to column names,%
                late after head=\xdef\aold{\x}\xdef\bold{\y},%
                after line=\xdef\aold{\x}\xdef\bold{\y}]%
                {Diff_Counts_1.csv}{}
                {\draw [line width=0.1mm] (\aold/240, \bold/1000) -- (\x/240,\y/1000) node {\large $\circ$};}       
         \vspace*{-3mm}
\end{tikzpicture}
  \end{center}
  \caption{Differenced (at lag $24$) hourly pedestrian counts $\tilde{z}^{(1)}[\timeidx]$ (see \eqref{equ_def_difference}) 
  at location $j\!=\!1$ (see Figure \ref{fig:counter}).}
    \label{fig:diff}
\end{figure}

We model the samples $\vx[\timeidx]$ using \eqref{equ_proc_model} with a block-length $\blocklen\!=\!12$ which has been chosen 
based on the empirical autocorrelation functions of the differenced time series $\tilde{z}^{(i)}$ (see \eqref{equ_def_difference}).
In order to infer the neighbourhoods $\mathcal{N}(i)$ in the CIG underlying the count measurements, we compute the 
test statistic 
\begin{align} 
\label{equ_def_stat_ped}
Z(\setT) & \!\defeq\!(1/\samplesize)\sum\limits_{\blkidx=1}^{\nrblocks} \|\mP_{\setT^{\perp}}^\blknot \vx_i^\blknot \|_2^2  \nonumber \\
& = (1/\samplesize)\sum\limits_{\blkidx=1}^{\nrblocks}  \min_{w^{\blknot}_{j} \in \mathbb{R}} \big\| \vx_{i}^{\blknot} - \sum_{j \in \setT} w^{\blknot}_{j} \vx_{j}^{\blknot} \big\|^{2}_{2}, 
\end{align} 
with the DFT samples \eqref{equ_dft_samples} and varying candidate sets $\setT \subseteq \{1,\ldots,\coeffdim \} \setminus \{i \}$.
\footnote{While our analysis applies only to real-valued vector samples $\vx[\timeidx]$ in \eqref{equ_proc_model}, the vector samples 
\eqref{equ_dft_samples} obtained from a DFT are typically complex-valued. However, we expect our analysis to also apply to 
complex-valued samples in \eqref{equ_dft_samples} by applying straightforward modifications of our methods. In particular, 
we believe that the fundamental dependencies (see \eqref{equ_suff_cond_ell_3}) between required sample size 
$\samplesize$ on number $\coeffdim$ of process components, sparsity $\sparsity$ and average connection strength 
$\rho^{2}_{\rm min}$ to remain valid when allowing the samples in \eqref{equ_proc_model} to be 
complex-valued Gaussian vectors.} 

Since we neither know the maximum node degree (sparsity) $\sparsity$, nor a lower bound $\rho^{2}_{\rm min}$ on the 
average connection strength, we cannot directly implement the sparse regression estimator \eqref{equ_neigborhood_est}. 
Instead, we try to estimate the neighborhood $\mathcal{N}(i)$ of node $i \in \nodes$ by evaluating the decay of the 
score $\mathcal{E}(\sparsity) \defeq \min_{|\setT|\!=\!\sparsity} Z(\setT)$ using the statistic \eqref{equ_def_stat_ped} 
(which is the first component in the objective function of the sparse regression estimator \eqref{equ_neigborhood_est}).

\begin{figure}[htbp]
\begin{center}
\scalebox{1}{
\begin{tikzpicture}
 \tikzset{x=2cm,y=3cm,every path/.style={>=latex},node style/.style={circle,draw}}
    \node [anchor=south] at (0,1.1) {$\mathcal{E}(s)/\mathcal{E}(0)$};
    \foreach \label/\labelval in  {0/$0$,0.25/$0.25$,0.5/$0.5$,0.75/$0.75$,1/$1$} 
    {\draw (1pt,\label) -- (-1pt,\label) node[left] {\labelval};}       
     \foreach \label/\labelval in  {0/$0$,1/$1$,2/$2$,3/$3$}
     {\draw (\label,1pt) -- (\label,-2pt) node[below] {\labelval};}
     \draw[->] (0,0) -- (3.1,0);
     \node [anchor=north] at (1.2,-0.2) {$s$};
     \draw[->] (0,0) -- (0,1.1);
      
    \csvreader[ head to column names,%
                late after head=\xdef\aold{\x}\xdef\bold{\y},%
                after line=\xdef\aold{\x}\xdef\bold{\y}]%
                {min_val_i_1.csv}{}
                {\draw [line width=0.1mm] (\aold-1, \bold) -- (\x-1,\y) node {\large $\circ$};}       
         \vspace*{-6mm}
\end{tikzpicture}}
  \end{center}
       \vspace*{-6mm}
  \caption{Score $\mathcal{E}(\sparsity)\!=\!\min_{|\setT|\!=\!\sparsity} Z(\setT)$ achieved by minimizing the statistic \eqref{equ_def_stat_ped} 
  (for node $i\!=\!1$) over all candidate sets $\setT$ with a prescribed size $\sparsity$.}
    \label{fig:score}
\end{figure}

In Figure \ref{fig:score}, we depict the score $\mathcal{E}(\sparsity)$ obtained for node $i\!=\!2$. We then 
choose the neighborhood size $\sparsity$ as the smallest number such that 
$\mathcal{E}(\sparsity)\!-\!\mathcal{E}(\sparsity\!+\!1)\!<\!2 (\mathcal{E}(\sparsity)\!-\!\widetilde{\mathcal{E}}(\sparsity))$ 
with the ``auxiliary score''
\begin{equation} 
\widetilde{\mathcal{E}}(\sparsity)\!=\!(1/\samplesize)\sum\limits_{\blkidx=1}^{\nrblocks}  \min_{w^{\blknot}_{j}} \big\| \vx_{i}^{\blknot}\!-\! 
\big( \sum_{j \in \setT'} w^{\blknot}_{j} \vx_{j}^{\blknot}\!+\!c \mathbf{f}^{\blknot} \big) \big\|^{2}_{2}.
\end{equation} 
Here, the index set $\setT'$ is chosen as $\setT'\!=\!\argmin_{|\setT|\!=\!\sparsity} Z(\setT)$. 

The idea behind comparing $\mathcal{E}(\sparsity)$ with $\widetilde{\mathcal{E}}(s)$ is to test if adding another process 
component to the components in $\mathcal{T}'$ yields a reduction in the statistic $\mathcal{E}(\sparsity\!+\!1)$ which is 
at least twice as large as the reduction of $\mathcal{E}(\sparsity)$ achieved by adding a ``fake'' pedestrian count signal 
obtained by i.i.d.\ uniformly distributed random variables $f[\timeidx] \sim \mathcal{U}[0,U]$. The interval size $U$ 
is chosen in order to match the empirical variance of the pedestrian counts $z^{(i)}[\timeidx]$. 

We have obtained the following estimates for the neighbourhoods in the CIG underlying the pedestrian count data: 
\begin{align}
\widehat{\mathcal{N}}(1)& \!=\!\{2\}, \widehat{\mathcal{N}}(2)\!=\!\{3,5\}, \widehat{\mathcal{N}}(3)\!=\!\{2,4\} ,  \nonumber \\ 
\widehat{\mathcal{N}}(4)& \!=\!\{3,5\} , \widehat{\mathcal{N}}(5)\!=\!\{2,4\}.  \nonumber
\end{align} 
In Figure \ref{fig:pedest_est_cig}, we depicted the CIG estimate obtained by placing an edge between nodes $i,j \in \nodes$ 
if either $j \in \widehat{\mathcal{N}}(i)$ or $i \in \widehat{\mathcal{N}}(j)$. The estimated graph structure seems 
well-aligned with the local road network.

\begin{figure}[htbp]
\begin{center}
 \includegraphics[width=7cm]{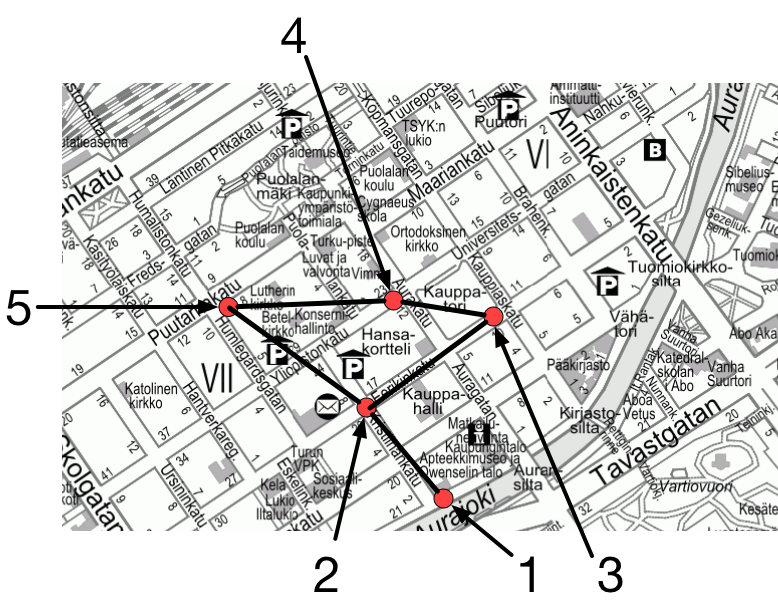}  
\caption{Map of Turku city including the locations of pedestrian count devices (depicted as red dots). The links between the 
count devices indicate the presence of an edge in the estimated CIG underlying the count data.}
\label{fig:pedest_est_cig}
\end{center}
\end{figure}

\section*{Acknowledgement}
We thank Tuomas Piippo and Arvi Leino from the city of Turku for help with the pedestrian 
count data. 

\bibliographystyle{IEEEtran}
\bibliography{/Users/junga1/Literature.bib}

\onecolumn
\vspace*{-0mm}
\section{Proof of the Main Result} 
\vspace*{-0mm}
\label{sec_proof_main_result}

We now verify Theorem \ref{thm_upper_bound_err_prob} by analyzing the probability $\prob\{ \error \}$ of the error event 
$\error$ (see \eqref{equ_def_error_no_superset}) when \eqref{equ_neigborhood_est} fails to deliver the correct neighbourhood 
$\mathcal{N}(i)$ of a particular node $i \in \nodes$ of the CIG $\cig$.  
Let us introduce the shorthands 
\begin{align}
\label{equ_def_projection_test_statistic}
\hspace*{-3mm}\errevent \!\defeq\!\{ Z(\mathcal{N}(i))\! + \lambda s_{i} >\!Z(\mathcal{T}) + \lambda |\setT|  \} \mbox{, with } 
Z(\setT)\!\defeq\!\frac{1}{\samplesize} \sum\limits_{\blkidx=1}^{\nrblocks} \|\mP_{\setT^{\perp}}^\blknot \vx_i^\blknot \|_2^2.
\end{align} 

It will be convenient to denote the set of all subsets of $\{1,\ldots,\coefflen\}$ of size at most $\sparsity$ 
but different from the true neighbourhood $\mathcal{N}(i)$ by
\begin{align*}
\setofT \defeq \{ \setT \subseteq \{1,\ldots,\coeffdim \} : |\setT| \leq \sparsity , \setT \neq \mathcal{N}(i)\} .
\end{align*}
Moreover, for given $\ell_{1}, t \leq \sparsity$, denote
\begin{equation} 
\label{equ_def_N_ell_1_t}
\mathcal{N}(\ell_{1},t)  \defeq \{ \setT \in \setofT:  |\setT|=t, |\mathcal{N}(i) \setminus \setT|\!=\!\ell_{1} \}. 
\end{equation} 
Thus, the set $\mathcal{N}(\ell_{1},t) \subseteq \setofT$ collects all the index sets in $\setofT$ with a prescribed size 
$t=|\setT|$ and overlap $\ell_{1} =  |\mathcal{N}(i) \setminus \setT|$ with the true neighbourhood $\mathcal{N}(i)$. 

An elementary combinatorial argument (see \cite[Sec. IV]{Wain2009TIT}) reveals that the number of these index sets is 
\begin{equation}
\label{equ_size_N_ell1_t}
N(\ell_{1},t) \defeq |\mathcal{N}(\ell_{1},t)|= \binom{s_{i}}{\ell_{1}} \binom{\coeffdim-s_{i}}{\ell_{2}}. 
\end{equation} 
with 
\begin{equation} 
\label{equ_def_ell_2}
\ell_{2} \defeq \ell_{1}+(t-s_{i}). 
\end{equation} 
Given a particular node $i \in \nodes$ with neighbourhood $\mathcal{N}(i)$, the quantities $\ell_{1}$ and $\ell_{2}$ are fully determined 
by the index set $\setT$. For notational convenience we will not make this dependence on $\setT$ explicit, i.e., we write $\ell_{1}$ and $\ell_{2}$ 
instead of $\ell_{1}(\setT)$ and $\ell_{2}(\setT)$. 
Note that 
\begin{equation}
\label{equ_ell_2_overlap_T}
\ell_{2} = | \setT \setminus \mathcal{N}(i) | \mbox{ and } \ell_{1}+\ell_{2} > 0 \mbox{ for every index set } \setT \in \mathcal{N}(\ell_{1},t).  
\end{equation}
Using the index set 
\begin{equation} 
\label{equ_size_I}
\mathcal{I} \defeq \{ (\ell_{1},t) \in \mathbb{Z}_{+}^{2}: \ell_{1} \leq s_{i}, t \leq \sparsity \} \setminus \{(0,s_{i}) \} \mbox{ with cardinality } |\mathcal{I}| \leq \sparsity^2,
\end{equation}
we can write
\begin{align}
\label{equ_set_of_t_subseteq}
\setofT & \subseteq   \bigcup_{(\ell_{1},t) \in \mathcal{I}} \mathcal{N}(\ell_{1},t).
\end{align}

Since the error event $\error$ (see \eqref{equ_def_error_no_superset}) can only occur if at least one 
of the events $\errevent$, for some $\setT \in \setofT$, occurs, 
\begin{equation}
\label{equ_subset_error_errevent}
\error \subseteq \bigcup_{\setT \in \setofT} \errevent,
\end{equation} 
implying, in turn via a union bound, 
\begin{equation} 
\label{equ_bound_union_err}
\prob \{ \error \} \stackrel{\eqref{equ_subset_error_errevent}}{\leq} \sum_{\setT \in \setofT} \prob \{ \errevent \} 
\stackrel{\eqref{equ_set_of_t_subseteq}}{\leq} \sum_{(\ell_{1},t) \in \mathcal{I}} \sum_{\setT \in \mathcal{N}(\ell_{1},t)} \prob \{ \errevent \}.
\end{equation} 

We now derive an upper bound $M(\ell_{1},t)$ on the individual probabilities $\prob\{ \errevent \}$ such that
\begin{equation}
\label{equ_bound_M_N_ell_1}
\prob\{ \errevent \} \leq M(\ell_{1},t) \mbox{ for any } \setT \in \mathcal{N}(\ell_{1},t). 
\end{equation} 
As the notation already indicates, the upper bound $M(\ell_{1},t)$ depends on the index set $\setT$ 
only via the overlap $\ell_{1}\!=\!|\mathcal{N}(i) \setminus \setT|$ and the size $t = | \setT |$. 

Combining \eqref{equ_set_of_t_subseteq}
with \eqref{equ_bound_union_err} implies, via a union bound,  
\begin{align}
\label{equ_prob_bound_M_N}
\log  \prob \{ \error \} & \stackrel{\eqref{equ_bound_union_err}}{\leq} 
 \log \sum_{ (\ell_{1},t) \in \mathcal{I} } \sum_{ \setT \in \mathcal{N}(\ell_{1},t) }  \prob \{ \errevent \} \nonumber \\[2mm]
  & \stackrel{\eqref{equ_bound_M_N_ell_1}}{\leq}\log \sum_{ (\ell_{1},t) \in \mathcal{I} } \sum_{ \setT \in \mathcal{N}(\ell_{1},t) } M(\ell_{1},t) \nonumber \\[2mm]
   & \leq  \log |\mathcal{I}| + \max_{(\ell_{1},t)\in \mathcal{I}} \hspace*{-0mm}  \big[ \log N(\ell_{1},t) \!+\! \log M(\ell_{1},t) \big] \nonumber \\[2mm]
    & \stackrel{\eqref{equ_size_N_ell1_t},\eqref{equ_size_I}}{\leq} 2 \log \sparsity + \max_{(\ell_{1},t)\in \mathcal{I}} 
    \hspace*{-0mm} \big[ \ell_{1} \log s_{i} + \ell_{2} \log (\coeffdim-s_{i}) \!+\! \log M(\ell_{1},t) \big] \nonumber \\[2mm]
     &  \leq 2 \log \sparsity + \max_{(\ell_{1},t)\in \mathcal{I}} \hspace*{-0mm} \big[ (\ell_{1} + \ell_{2}) \log \coeffdim \!+\! \log M(\ell_{1},t) \big]. 
\end{align} 

Our next goal is to find a sufficiently tight upper bound $M(\ell_{1},t)$ on the probabilities of the events  
$\prob\{ \errevent \}$ (see \eqref{equ_def_projection_test_statistic}) with some index set $\setT \in \mathcal{N}(\ell_{1},t)$. 
To this end, we make \eqref{equ_comp_2} more handy by stacking the (block-wise) noise 
vectors $\bfep_{i}^\blknot \in \mathbb{R}^{\blockLength}$ into the single noise vector
\begin{align}
\bfep_{i} \!=\! \big( \big(\bfep_{i}^{(1)}\big)^{T}, \ldots, \big(\bfep_{i}^{(\nrblocks)} \big)^{T} \big)^T \!\sim\! \mathcal{N}(\mathbf{0},\mC_{\bfep_{i}})  
 \mbox{ , with } \mC_{\bfep_{i}} \!=\!  {\rm blkdiag} \{(1/K^{(\blkidx)}_{i,i} ) \mathbf{I}_{\blocklen} \}_{\blkidx=1}^{\nrblocks}.\label{equ_def_single_noise}
\end{align}
By introducing the projection matrix  
\begin{equation}
\label{equ_def_mP_blk_set_T_perp}
\mP_{\setT^{\perp}} \defeq {\rm blkdiag} \{  \mathbf{P}^{\blknot}_{\setT^{\perp}} \}_{\blkidx=1}^{\nrblocks},  
\end{equation}
we can characterize the error event $\errevent$ in \eqref{equ_def_projection_test_statistic}, for any $\setT \in \mathcal{N}(\ell_{1},t)$ (see \eqref{equ_def_N_ell_1_t}), as 
\vspace*{0mm}
\begin{align}
\hspace{-0.5mm}\errevent & \!=\! \big\{ Z(\mathcal{N}(i)) \!-\! (1/\samplesize) \norm[2]{\mP_{\setT^{\perp}} \bfep_{i}}^2 
\!>\! Z(\setT)\!-\!(1/\samplesize) \norm[2]{\mP_{\setT^{\perp}} \bfep_{i}}^2  + \lambda (t-s_{i})\big\}.
\label{eq_proj_erro_event_separated}
\end{align}

In order to derive the upper bound $M(\ell_{1},t)$ let us, for some number $\delta > 0$ whose precise value to be 
chosen in what follows, define the two error events
\begin{subequations}
\begin{align}
\label{eq_proj_erro_event_1}
\hspace*{-3mm}\mathcal{E}_1(\delta) \!&\defeq\! \big\{  Z(\mathcal{N}(i)) - (1/\samplesize) \norm[2]{\mP_{\setT^{\perp}} \bfep_{i}}^2 \geq \delta\!+\!(\lambda/2)(t\!-\!s_{i}) \big\},\\[3mm]
\hspace*{-3mm}\mathcal{E}_2(\delta) \!&\defeq\! \big\{ Z(\setT)\!-\!(1/\samplesize) \norm[2]{\mP_{\setT^{\perp}} \bfep_{i}}^2 +(\lambda/2)(t\!-\!s_{i}) \leq 2\delta \big\}.
\label{eq_proj_erro_event_2}
\end{align}
\end{subequations}
By \eqref{eq_proj_erro_event_separated}, an error $\errevent$ can only occur if either $\mathcal{E}_1(\delta)$ 
or $\mathcal{E}_2(\delta)$ occurs, i.e., $\errevent \subseteq \mathcal{E}_1(\delta) \cup \mathcal{E}_2(\delta)$. 
Therefore, by a union bound, 
\begin{align}
\label{equ_union_bound_E_1_2}
\prob \{ \errevent \} & \leq \prob \{ \mathcal{E}_1(\delta) \}\!+\!\prob \{ \mathcal{E}_2(\delta) \} \nonumber \\[3mm]
& =  \expect \big \{ \prob \{ \mathcal{E}_1(\delta) | \vx_{\setT} \} \big\} \!+\! \expect \big\{ \prob \{ \mathcal{E}_2(\delta) | \vx_{\setT} \} \big\}, 
\end{align}
where we condition on the components $\vx_{\setT} = \{ \vx_{i} \}_{i \in \setT}$ (cf. \eqref{equ_component_i}). 

We will now bound each of the two summands in \eqref{equ_union_bound_E_1_2} separately. 
To this end, we will use the singular value decomposition (SVD)
\begin{equation}
\mP_{\setT^{\perp}} \mathbf{C}^{1/2}_{\tilde{x}_{i}} = \mathbf{U} {\rm diag}\{d_{j} \}_{j=1}^{\samplesize} \mathbf{V}^{T} 
\label{equ_svd}
\end{equation} 
with the singular values $d_{j} \in \mathbb{R}_{+}$ and the singular vectors in the columns of the orthonormal matrices 
$\mathbf{U} \in \mathbb{R}^{\samplesize \times \samplesize}$ and 
$\mathbf{V} \in \mathbb{R}^{\samplesize \times \samplesize}$ (i.e., $\mathbf{U}\mathbf{U}^{T} = \mathbf{V} \mathbf{V}^{T} = \mathbf{I}$). 
The singular values $d_{j}$, which satisfy
\begin{equation}
\label{equ_upper_bound_sing_values_beta}
d_{j} \stackrel{\eqref{equ_distribution_tilde_vx},\eqref{equ_bound_simga_2_blokdx}}{\leq} \sqrt{\beta}
\end{equation} 
will play a prominent role in controlling the probabilities of the error events $\mathcal{E}_1(\delta)$ and 
$\mathcal{E}_2(\delta)$ (see \eqref{eq_proj_erro_event_1}, \eqref{eq_proj_erro_event_2}). In particular, 
we will analyze the probabilities of those events for the choice $\delta = m_{3}/4$ with 
\begin{align}
m_{3} & \defeq  \expect \{ (1/\samplesize) \| \mP_{\setT^{\perp}}\tilde{\vx}_{i} \|^{2}_{2} \mid \vx_{\setT} \} \nonumber \\[2mm]
& \stackrel{(a)}{=} (1/\samplesize) \trace \{\mathbf{C}^{1/2}_{\tilde{x}_{i}} \mP_{\setT^{\perp}} \mathbf{C}^{1/2}_{\tilde{x}_{i}} \} \nonumber \\[2mm]
& \stackrel{\eqref{equ_svd}}{=} (1/\samplesize) \sum_{j=1}^{\samplesize} d_{j}^{2},  
\label{equ_events_3_m3}
\end{align}
where in step $(a)$ we used the statistical independence of $\tilde{\vx}_{i}$ and $\vx_{\setT}$ (cf.\ \eqref{equ_comp_1}).  

The quantity $m_{3}$ measures the minimum achievable error when approximating the process component $\vx_{i}$ 
(see \eqref{equ_component_i}) using a linear combination of the process components $\vx_{\setT}\!=\!\{ \vx_{j} \}_{j \in \setT}$. 
A lower bound on $m_{3}$ can be obtained via the minimum average connection strength $\rho^{2}_{\rm min}$ (see Assumption \ref{aspt_minimum_par_cor}). 
Indeed,
\begin{align}
m_{3} &\stackrel{\eqref{equ_events_3_m3}}{=} (1/\samplesize) \trace \{ \mC^{1/2}_{\tilde{\vx}_{i}} \mP_{\setT^{\perp}}\mC^{1/2}_{\tilde{\vx}_{i}}\} \nonumber \\[3mm] 
& = (1/\samplesize) \trace \{ \mC_{\tilde{\vx}_{i}} \mP_{\setT^{\perp}}\} \nonumber \\[3mm] 
& \stackrel{\eqref{equ_def_vx_i_long}}{=} (1/\samplesize) \sum_{\blkidx=1}^{\nrblocks}  \trace \big\{ \mP^{(\blkidx)}_{\setT^{\perp}} \tilde{\sigma}_{b}^{2} \mathbf{I} \big\}  \nonumber \\[3mm]
& \stackrel{\eqref{equ_def_P_b_setT}}{=} (1/\samplesize) \sum_{\blkidx=1}^{\nrblocks} \tilde{\sigma}^{2}_{\blkidx} (\blocklen\!-\!|\setT|). \label{equ_derviation_m_3_1111}
\end{align} 
This can be further developed by using the lower bound \eqref{equ_events_3_m3_2} for the variance $\tilde{\sigma}^{2}_{\blkidx}$, 
\begin{align} 
m_{3} &\stackrel{\eqref{equ_derviation_m_3_1111},\eqref{equ_events_3_m3_2}}{\geq} \sum_{j\in \mathcal{N}(i)\setminus \setT} 
\sum_{\blkidx=1}^{\nrblocks}  (K_{i,j}^{(\blkidx)}/K_{i,i}^{(\blkidx)})^2 (\blocklen\!-\!|\setT|)   /\samplesize \nonumber \\
&\stackrel{\eqref{equ_partial_correlation_def},\eqref{equ_rho_min_aspt}}{\geq} \ell_{1} \nrblocks \rho^{2}_{\rm min} (\blocklen\!-\!|\setT|)   /\samplesize  \nonumber \\
& \stackrel{\eqref{equ_sparsity_aspt}}{\geq} (2/3) \ell_{1} \rho^{2}_{\rm min}.
\label{equ_bound_m_3_111}
\end{align} 
For the choice $\delta = m_{3}/4$ this implies, in turn, 
\begin{equation}
\label{equ_lower_boud_dela_rho_min}
\delta \geq (1/6) \ell_{1} \rho^{2}_{\rm min}. 
\end{equation} 
In order to upper bound the probability of the event $\mathcal{E}_1(\delta)$, observe 
\begin{align}
Z(\mathcal{N}(i)) & \stackrel{\eqref{equ_def_projection_test_statistic}}{=} (1/\samplesize) \sum\limits_{\blkidx=1}^{\nrblocks} 
\norm[2]{\mP_{\mathcal{N}(i)^{\perp}}^{\blknot} \vx_i^{\blknot}}^2 \nonumber\\
&\stackrel{\eqref{equ_comp_2}}{=} (1/\samplesize) \sum\limits_{\blkidx=1}^{\nrblocks} \bigg \|\mP_{\mathcal{N}(i)^{\perp}}^{\blknot} 
\big( \sum_{i \in \mathcal{N}(i)} a_{j} \vx_j^{\blknot}  + \bfep_{i}^{\blknot} \big)\bigg \|_2^2 \nonumber\\
&\stackrel{\eqref{equ_def_mP_blk_set_T_perp},\eqref{equ_def_single_noise}}{=} (1/\samplesize) \norm[2]{\mP_{\mathcal{N}(i)^{\perp}} \bfep_{i}}^2.
\label{equ_proj_neighbor_set}
\end{align}
Hence,
\begin{align}
\label{equ_bound_prob_E_1}
\prob \{ \mathcal{E}_1(\delta) \mid \vx_{\setT} \} &\stackrel{\eqref{eq_proj_erro_event_1}}{=} 
\prob \big\{  Z(\mathcal{N}(i))\!-\!(1/\samplesize) \norm[2]{\mP_{\setT^{\perp}} \bfep_{i}}^2\!\geq\! \delta\!+\!(\lambda/2) (t\!-\!s_{i}) \!\mid\! \vx_{\setT}\big\} \nonumber\\[2mm]
& \hspace*{-5mm} \stackrel{\eqref{equ_proj_neighbor_set}}{=}  \prob \big\{ (1/\samplesize) \norm[2]{\mP_{\mathcal{N}(i)^{\perp}} \bfep_{i}}^2 \!-\! 
 (1/\samplesize)  \norm[2]{\mP_{\setT^{\perp}} \bfep_{i}}^2  \geq \delta\!+\!(\lambda/2) (t\!-\!s_{i}) \mid \vx_{\setT}  \big\} \nonumber\\[2mm]
& \hspace*{-5mm} \stackrel{\lambda=\rho^{2}_{\rm min}/6}{=}    \prob \big\{ (1/\samplesize) \norm[2]{\mP_{\mathcal{N}(i)^{\perp}} \bfep_{i}}^2 \!-\! 
 (1/\samplesize)  \norm[2]{\mP_{\setT^{\perp}} \bfep_{i}}^2  \geq \delta\!+\!(\rho^{2}_{\rm min}/12) (t\!-\!s_{i}) \mid \vx_{\setT}  \big\} 
 \nonumber\\[2mm]
& \hspace*{-5mm} \stackrel{\eqref{equ_lower_boud_dela_rho_min},\eqref{equ_def_ell_2}}{\leq}  \prob \big\{ (1/\samplesize) \norm[2]{\mP_{\mathcal{N}(i)^{\perp}} \bfep_{i}}^2 \!-\! 
 (1/\samplesize)  \norm[2]{\mP_{\setT^{\perp}} \bfep_{i}}^2  \geq \rho^{2}_{\rm min} (\ell_{1}+\ell_{2})/12\mid \vx_{\setT}  \big\}.%
\end{align}
By elementary properties of projections in Euclidean spaces \cite[Appx. A]{Wain2009TIT}
\begin{align} 
\label{equ_elem_identieis_proj}
 \norm[2]{\mP_{\mathcal{N}(i)^{\perp}} \bfep_{i}}^2 \!-\!  \norm[2]{\mP_{\setT^{\perp}} \bfep_{i}}^2 & =
 \norm[2]{\mP_{\setT} \bfep_{i}}^2  -  \norm[2]{\mP_{\mathcal{N}(i)} \bfep_{i}}^2  \nonumber \\ 
 & = \norm[2]{\big( \mP_{\setT} - \mP_{\setT \cap \mathcal{N}(i)} \big)\bfep_{i}}^2  -  \norm[2]{\big( \mP_{\mathcal{N}(i)} - \mP_{\setT \cap \mathcal{N}(i)} \big)\bfep_{i}}^2, 
\end{align} 
with 
\begin{equation}
\nonumber
 \mP_{\setT} \defeq  {\rm blkdiag} \big\{ \mP_{\setT}^{\blknot} \big\}_{\blkidx=1}^{\nrblocks}.
\end{equation} 
Combining \eqref{equ_elem_identieis_proj} with \eqref{equ_bound_prob_E_1}, 
\begin{align}
\label{equ_bound_prob_E_1112}
\prob \{ \mathcal{E}_1(\delta)\mid \vx_{\setT}  \} \leq  \prob \big\{ (1/\samplesize)  \norm[2]{\big( \mP_{\setT}
 - \mP_{\setT \cap \mathcal{N}(i)} \big)\bfep_{i}}^2  \geq \rho^{2}_{\rm min} (\ell_{1}+\ell_{2})/12\mid \vx_{\setT}  \big\}. 
\end{align}
with 
\begin{equation} 
\big( \mP_{\setT} - \mP_{\setT \cap \mathcal{N}(i)} \big) =  {\rm blkdiag} \big\{ \mP_{\setT}^{\blknot}
 -\mP^{\blknot}_{\setT \cap \mathcal{N}(i)}   \big\}_{\blkidx=1}^{\nrblocks}. \nonumber
\end{equation}  
The matrix $\mP^{\blknot} _{\setT} - \mP^{\blknot} _{\setT \cap \mathcal{N}(i)} \in \mathbb{R}^{\blocklen \times \blocklen}$ 
is a random (since it depends on $\vx_{\setT}\!=\!\{ \vx_{j} \}_{j \in \setT}$) orthogonal projection matrix on a subspace of 
dimension at most $\ell_{2} = | \setT \setminus \mathcal{N}(i)|$ (cf. \eqref{equ_ell_2_overlap_T}), i.e., 
\begin{equation} 
\label{equ_diff_projection_matrix}
\mP^{\blknot} _{\setT} - \mP^{\blknot} _{\setT \cap \mathcal{N}(i)}  = \sum_{j=1}^{\ell_{2}} \tilde{a}^{\blknot}_{j} \vu^{\blknot}_{j} \big( \vu^{\blknot}_{j} \big)^{T}
\end{equation} 
with some coefficients $\tilde{a}^{\blknot}_{j} \in \{0,1\}$ and orthonormal vectors $\{ \vu_{j}^{\blknot} \in \mathbb{R}^{\blocklen} \}_{j=1,\ldots,\ell_{2}}$. 
Inserting \eqref{equ_diff_projection_matrix} into \eqref{equ_bound_prob_E_1112}, 
\begin{align}
\label{equ_bound_prob_E_11123}
\prob \{ \mathcal{E}_1(\delta)\mid \vx_{\setT}  \} & \leq   \prob \big\{ (1/\samplesize)  \norm[2]{\big( \mP_{\setT} 
- \mP_{\setT \cap \mathcal{N}(i)} \big)\bfep_{i}}^2  
\geq \rho^{2}_{\rm min} (\ell_{1}+\ell_{2})/12\mid \vx_{\setT}  \big\} \nonumber \\[2mm]
& = \prob \big\{ (1/\samplesize) \sum_{\blkidx=1}^{\nrblocks}  \norm[2]{\big( \mP^{(\blkidx)}_{\setT} 
- \mP^{(\blkidx)}_{\setT \cap \mathcal{N}(i)} \big)\bfep^{(\blkidx)}_{i}}^2  
\geq \rho^{2}_{\rm min} (\ell_{1}+\ell_{2})/12\mid \vx_{\setT}  \big\}  \nonumber \\[2mm]
& \stackrel{\eqref{equ_diff_projection_matrix}}{=}  \prob \big\{ (1/\samplesize) \sum_{\blkidx=1}^{\nrblocks}  
\sum_{j=1}^{\ell_{2}} \tilde{a}^{\blknot}_{j}  \big( z^{\blknot}_{j} \big)^{2}  
\geq \rho^{2}_{\rm min} (\ell_{1}+\ell_{2})/12\mid \vx_{\setT}  \big\} 
\end{align}
with $z^{\blknot}_{j} = \big( \vu^{\blknot}_{j} \big)^{T} \bfep^{\blknot}_{i} \sim \mathcal{N}(0,1/K^{\blknot}_{i,i})$ (conditioned on $\vx_{\setT}$). 
Then, as can be verified easily, 
\begin{align}
\label{equ_sum_prof_E_T_1}
 (1/\samplesize) \sum_{\blkidx=1}^{\nrblocks}  \sum_{j=1}^{\ell_{2}} \tilde{a}^{\blknot}_{j}  \big( z^{\blknot}_{j} \big)^{2}  
 = \sum_{n=1}^{\samplesize} \tilde{a}_{n} z_{n}^{2} \mbox{, with } z_{n} \sim \mathcal{N}(0,1)
\end{align}
and coefficients $\tilde{a}_{n} \in [0, \beta/\samplesize]$ (cf. \eqref{equ_bounds_eigvals_asspt3}) satisfying 
\begin{equation}
\label{equ_sum_tilde_a_j_bound_24}
\sum_{n=1}^{\samplesize} \tilde{a}_{n} \stackrel{(a)}{\leq} \ell_{2} \beta \nrblocks /\samplesize \stackrel{\eqref{equ_condition_rho_min_kappa}}{\leq} \ell_{2} \rho^{2}_{\rm min}/24. 
\end{equation} 
Here, step $(a)$ can be verified by taking (conditional, w.r.t.\ $\vx_{\setT}\!=\!\{ \vx_{j} \}_{j \in \setT}$) expectations 
of \eqref{equ_sum_prof_E_T_1} and using $|a_{j}^{\blknot}| \leq 1$, $1/K^{\blknot}_{i,i} \stackrel{\eqref{equ_bounds_eigvals_asspt3}}{\leq} \beta$.  

Inserting \eqref{equ_sum_prof_E_T_1} into \eqref{equ_bound_prob_E_11123}, 
\begin{align}
\label{equ_bound_prob_E_111234}
 \prob \{ \mathcal{E}_1(\delta)\mid \vx_{\setT}  \} & \leq \prob \big\{ \sum_{j=1}^{\samplesize} \tilde{a}_{j} z_{j}^{2} 
  \geq \rho^{2}_{\rm min} (\ell_{1}+\ell_{2})/12\mid \vx_{\setT}  \big\}   \nonumber \\
 & =  \prob \big\{ \sum_{j=1}^{\samplesize} \tilde{a}_{j} z_{j}^{2} -  \sum_{j=1}^{\samplesize} \tilde{a}_{j}  
 \geq \rho^{2}_{\rm min} (\ell_{1}+\ell_{2})/12 - \sum_{j=1}^{\samplesize} \tilde{a}_{j}  \mid \vx_{\setT}  \big\}    \nonumber \\ 
 \nonumber \\
 & \stackrel{\eqref{equ_sum_tilde_a_j_bound_24}}{\leq} \prob \big\{ \sum_{j=1}^{\samplesize} \tilde{a}_{j} z_{j}^{2} -  \sum_{j=1}^{\samplesize} \tilde{a}_{j}  
 \geq \rho^{2}_{\rm min} (\ell_{1}+\ell_{2})/24  \mid \vx_{\setT}  \big\} \nonumber \\
 & \stackrel{z_{j}|\vx_{\setT} \sim \mathcal{N}(0,1)}{\leq} \prob \big\{ \sum_{j=1}^{\samplesize} \tilde{a}_{j} z_{j}^{2} -   \expect \{ \sum_{j=1}^{\samplesize} \tilde{a}_{j} z_{j}^{2} | \vx_{\setT} \}
 \geq \rho^{2}_{\rm min} (\ell_{1}+\ell_{2})/24  \mid \vx_{\setT}  \big\}.  
\end{align} 
We now apply Lemma \ref{lem_sub_exp_lin} to \eqref{equ_bound_prob_E_111234} using the choice
\begin{equation} 
\label{equ_def_eta_case_I}
\eta\!\defeq\!\rho^{2}_{\rm min} (\ell_{1}\!+\!\ell_{2})/24,
\end{equation} 
$a_{j} \defeq \tilde{a}_{j}$ and $b_{j} \defeq 0$ (cf. \eqref{equ_def_y_lem_sub_exp_lin}). 
This yields 
\begin{align}
\prob \{ \mathcal{E}_{1} (\delta) \mid \vx_{\setT}  \} & \stackrel{\eqref{equ_bound_prob_E_111234},\eqref{lem_1_1_lin}}{\leq} 
2 \exp \bigg(- \frac{ \eta^{2}/8}{\sum_{j=1}^{\samplesize} \tilde{a}^{2}_{j} \!+\! \eta \max\limits_{j=1,\ldots,\samplesize} \tilde{a}_{j}  }   \bigg) \nonumber \\ 
& \leq 2 \exp \bigg(- \frac{\samplesize \eta^{2}/(8\beta)}{\sum_{j=1}^{\samplesize} \tilde{a}_{j} \!+\! \eta }   \bigg), 
\label{equ_err_event_11233}
\end{align} 
where the second inequality uses $\max\limits_{j=1,\ldots,\samplesize} \tilde{a}_{j}  \stackrel{\eqref{equ_sum_prof_E_T_1}}{\leq} \beta/\samplesize$. 
Combining 
\begin{equation} 
\label{equ_sum_tilde_a_j}
\sum_{j=1}^{\samplesize} \tilde{a}_{j}  \stackrel{\eqref{equ_sum_tilde_a_j_bound_24}}{\leq} \ell_{2} \rho_{\rm min} /24 \stackrel{\eqref{equ_def_eta_case_I}}{\leq} \eta 
\end{equation} 
with \eqref{equ_err_event_11233}, we arrive at 
\begin{align} 
\prob \{ \mathcal{E}_{1} (\delta)   \}  & = \expect \big\{ \prob \big\{ \mathcal{E}_{1} (\delta)  \mid \vx_{\setT}  \big\} \big\}\nonumber \\
& \stackrel{\eqref{equ_err_event_11233},\eqref{equ_sum_tilde_a_j}}{\leq} 2 \exp \bigg(- \samplesize \eta/(16\beta)   \bigg)  \nonumber \\ 
& \stackrel{\eqref{equ_def_eta_case_I}}{=}  2 \exp \bigg(- \samplesize \rho_{\rm min} (\ell_{1}\!+\!\ell_{2}) /(24 \cdot 16\beta)   \bigg).   \label{equ_upper_bound_E_1_CaseI}   
\end{align} 


To upper bound the probability of $\mathcal{E}_{2}(\delta)$ (cf.\ \eqref{eq_proj_erro_event_2}), consider
\begin{align}
\label{equ_expre_E2}
\hspace*{-3mm}\prob \big\{ \mathcal{E}_2(\delta) \mid \vx_{\setT} \big \}  \!&\stackrel{\eqref{eq_proj_erro_event_2}}{\defeq}\!  
\prob \bigg \{ Z(\setT)\!-\!(1/\samplesize) \norm[2]{\mP_{\setT^{\perp}} \bfep_{i}}^2 + (\lambda/2) (t\!-\!s_{i}) \leq 2\delta  \mid \vx_{\setT} \bigg\} \nonumber \\[2mm]
& \hspace*{-2mm} \stackrel{ \eqref{equ_def_projection_test_statistic}}{=}  \prob \bigg\{ (1/\samplesize)\vx^{T}_i \mP_{\setT^{\perp}} \vx_{i}\!-\!(1/\samplesize) \bfep^{T}_{i}  
\mP_{\setT^{\perp}} \bfep_{i}+  (\lambda/2) (t\!-\!s_{i})  \!\leq\!2 \delta  \mid \vx_{\setT} \bigg\} \nonumber \\[2mm]
& \hspace*{-2mm} \stackrel{ \eqref{equ_comp_1}}{=} \prob \bigg\{ (1/\samplesize)    \tilde{\vx}^{T}_i \mP_{\setT^{\perp}} \tilde{\vx}_{i}\!+\!(2/\samplesize)
\tilde{\vx}^{T}_{i}  \mP_{\setT^{\perp}} \bfep_{i} +  (\lambda/2) (t\!-\!s_{i}) \!\leq\! 2 \delta  \mid \vx_{\setT} \bigg\} 
\end{align} 
with $\bfep_{i}=(\varepsilon_{1},\ldots,\varepsilon_{\samplesize})^{T}$ (cf.\ \eqref{equ_def_single_noise}) and $\tilde{\vx}_{i}$ (cf.\ \eqref{equ_def_vx_i_long}). 

By defining the random vector 
\begin{equation} 
\label{equ_def_vv}
\vv =(v_{1},\ldots,v_{\samplesize})^{T} \defeq \mathbf{V}^{T}   \mathbf{C}^{-1/2}_{\tilde{x}_{i}}\tilde{\vx}_{i},  \nonumber
\end{equation}
using the (random) orthonormal matrix $\mathbf{V} \in \mathbb{R}^{\samplesize \times \samplesize}$ constituted by the 
singular vectors of the matrix $\mP_{\setT^{\perp}} \mathbf{C}^{1/2}_{\tilde{x}_{i}}$  (cf.\ \eqref{equ_svd}), 
we can rewrite \eqref{equ_expre_E2} as
\begin{align}
 \label{equ_event_2_2345}
\prob \{ \mathcal{E}_2(\delta) \mid \vx_{\setT} \} &\!=\! \prob \bigg\{ \frac{1}{\samplesize} \sum_{j=1}^{\samplesize} v^{2}_{j} d^{2}_{j}\!+\!\frac{2}{\samplesize}
\sum_{j=1}^{\samplesize} v_{j} d_{j} \varepsilon_{j}\!\leq\!2 \delta -(\lambda/2) (t\!-\!s_{i}) \mid \vx_{\setT}  \bigg\} \nonumber \\ 
&  \hspace*{-10mm} \stackrel{\delta=m_{3}/4}{=}  \prob \bigg\{ \frac{1}{\samplesize} \sum_{j=1}^{\samplesize} v^{2}_{j} d^{2}_{j}\!+\!\frac{2}{\samplesize}
\sum_{j=1}^{\samplesize} v_{j} d_{j} \varepsilon_{j}- m_{3} \!\leq\! -m_{3}/2 -(\lambda/2) (t\!-\!s_{i})  \mid \vx_{\setT} \bigg\}.
\end{align} 
Note that, conditioned on $\vx_{\setT}$, the vector $\vv$ is standard Gaussian, i.e., $\vv \sim \mathcal{N}(\mathbf{0},\mathbf{I}_{\samplesize})$.
We now consider \eqref{equ_event_2_2345} for the particular choice $\lambda = \rho_{\rm min}/6$ which yields, using \eqref{equ_bound_m_3_111},
\begin{align}
\label{equ_event_2_23456}
\prob \big\{ \mathcal{E}_2(\delta)\mid \vx_{\setT} \big\} &\stackrel{\delta=m_{3}/4}{=}  \prob \bigg\{ \frac{1}{\samplesize} \sum_{j=1}^{\samplesize} v^{2}_{j} d^{2}_{j}\!+\!\frac{2}{\samplesize}
\sum_{j=1}^{\samplesize} v_{j} d_{j} \varepsilon_{j}- m_{3} \!\leq\! -m_{3}/2 -(\lambda/2) (t\!-\!s_{i})  \mid \vx_{\setT}  \bigg\} \nonumber \\
&\hspace*{-20mm} =  \prob \bigg\{ \frac{1}{\samplesize} \sum_{j=1}^{\samplesize} v^{2}_{j} d^{2}_{j}\!+\!\frac{2}{\samplesize}
\sum_{j=1}^{\samplesize} v_{j} d_{j} \varepsilon_{j}- m_{3} \!\leq\! -(3/8)m_{3} -((\lambda/2) (t\!-\!s_{i})+m_{3}/8)  \mid \vx_{\setT}  \bigg\} \nonumber \\
&\hspace*{-20mm} \stackrel{\eqref{equ_bound_m_3_111}}{\leq}  \prob \bigg\{ \frac{1}{\samplesize} \sum_{j=1}^{\samplesize} v^{2}_{j} d^{2}_{j}\!+\!\frac{2}{\samplesize}
\sum_{j=1}^{\samplesize} v_{j} d_{j} \varepsilon_{j}- m_{3} \!\leq\! -(3/8)m_{3} -((\lambda/2)  (t\!-\!s_{i})+\ell_{1} \rho_{\rm min}/12)  \mid \vx_{\setT}  \bigg\}\nonumber \\
&\hspace*{-20mm} \stackrel{\lambda\!=\!\rho_{\rm min}/6}{=}  \prob \bigg\{ \frac{1}{\samplesize} \sum_{j=1}^{\samplesize} v^{2}_{j} d^{2}_{j}\!+\!\frac{2}{\samplesize}
\sum_{j=1}^{\samplesize} v_{j} d_{j} \varepsilon_{j}- m_{3} \!\leq\! -(3/8)m_{3} -(\rho_{\rm min}/12) ( (t\!-\!s_{i})+\ell_{1})  \mid \vx_{\setT}  \bigg\}\nonumber \\
&\hspace*{-20mm} \stackrel{\eqref{equ_ell_2_overlap_T}}{=}  \prob \bigg\{ \frac{1}{\samplesize} \sum_{j=1}^{\samplesize} v^{2}_{j} d^{2}_{j}\!+\!\frac{2}{\samplesize}
\sum_{j=1}^{\samplesize} v_{j} d_{j} \varepsilon_{j}- m_{3} \!\leq\! -(3/8)m_{3} -(\rho_{\rm min}/12) \ell_{2}  \mid \vx_{\setT}  \bigg\}.
\end{align}

We will invoke Lemma \ref{lem_sub_exp_lin} to obtain an upper bound for $\prob \big\{ \mathcal{E}_2(\delta=m_{3}/4) \mid \vx_{\setT} \big\}$. 
To this end, in order to control the term $(2/\samplesize) \sum\limits_{j=1}^{\samplesize} v_{j} d_{j} \varepsilon_{j}$ in \eqref{equ_event_2_2345}, 
we condition on the event
\begin{equation} 
\label{equ_def_event_A}
\mathcal{A} \defeq \bigg\{ (1/\samplesize) \sum_{j=1}^{\samplesize} d_{j}^{2} \varepsilon^{2}_{j} 
\leq \underbrace {2 (\beta/\samplesize) \sum_{j=1}^{\samplesize} d_{j}^{2}}_{\stackrel{\eqref{equ_events_3_m3}}{=} 2 \beta  m_{3} } + \beta \ell_{2} (\rho_{\rm min}/12) \bigg\}
\end{equation} 
with the constant $\beta$ of Assumption \ref{aspt_eig_val}. The event $\mathcal{A}$ is, conditioned on $\vx_{\setT}$, statistically 
independent of $\tilde{\vx}_{i}$ (cf.\ \eqref{equ_comp_1}) since, loosely speaking, its definition \eqref{equ_def_event_A} 
involves only the random variables $\{ \varepsilon_{j} \}_{j=1,\ldots,\samplesize}$ which are statistically independent of $\tilde{\vx}_{i}$ (cf.\ \eqref{equ_comp_1}) 
and quantities (e.g., the singular values $d_{j}$) which are constant when conditioning on $\vx_{\setT}$ . 

We can upper bound the probability $\prob \big\{ \mathcal{E}_2(\delta=m_{3}/4) \mid \vx_{\setT} \big \}$ as 
\begin{align} 
\label{equ_bound_prob_E_2_total_prob}
\prob \{ \mathcal{E}_2(\delta) \mid \vx_{\setT} \} & = \prob \{ \mathcal{E}_2(\delta) | \mathcal{A}, \vx_{\setT} \} \prob \{ \mathcal{A} \mid \vx_{\setT}\}
+ \underbrace{\prob \{ \mathcal{E}_2(\delta) | \mathcal{A}^{c}, \vx_{\setT} \}}_{\leq 1}  \prob \{ \mathcal{A}^{c} \mid \vx_{\setT} \}  \nonumber \\[2mm]
& \leq  \prob \{ \mathcal{E}_2(\delta) | \mathcal{A}, \vx_{\setT} \} + \prob \{ \mathcal{A}^{c} \mid \vx_{\setT} \}. 
\end{align}
In order to control the probability $\prob \{ \mathcal{A}^{c} \mid \vx_{\setT} \}$ in \eqref{equ_bound_prob_E_2_total_prob}, we will invoke 
Lemma \ref{lem_sub_exp_lin}. To this end, observe  
\begin{align}
\prob \{ \mathcal{A}^{c} \mid \vx_{\setT} \} & = \prob \{ (1/\samplesize) \sum_{j=1}^{\samplesize} d_{j}^{2} \varepsilon^{2}_{j} 
\geq 2(\beta/\samplesize) \sum_{j=1}^{\samplesize} d_{j}^{2} + \beta \ell_{2} (\rho_{\rm min}/12)  \mid \vx_{\setT} \} \nonumber \\ 
& \hspace*{-20mm} \stackrel{(a)}{\leq}  \prob \{ (1/\samplesize) \sum_{j=1}^{\samplesize}  d_{j}^{2}  \varepsilon^{2}_{j}  
- \expect\{(1/\samplesize) \sum_{j=1}^{\samplesize} d_{j}^{2} \varepsilon^{2}_{j}  \mid \vx_{\setT}  \} 
\geq  (\beta/\samplesize) \sum_{j=1}^{\samplesize} d_{j}^{2} + \beta \ell_{2} (\rho_{\rm min}/12)  \mid \vx_{\setT}  \}, 
\label{equ_upper_bound_A_c}
\end{align} 
where $(a)$ is due to 
\begin{equation} 
\nonumber
\expect\bigg\{(1/\samplesize) \sum_{j=1}^{\samplesize} d_{j}^{2} \varepsilon^{2}_{j}  \mid \vx_{\setT}  \bigg\} = (1/\samplesize) 
\sum_{j=1}^{\samplesize} d_{j}^{2} \expect\{ \varepsilon^{2}_{j}  \mid \vx_{\setT}  \}   
\stackrel{\eqref{equ_bounds_eigvals_asspt3},\eqref{equ_def_single_noise}}{\leq} (\beta/\samplesize) \sum_{j=1}^{\samplesize} d_{j}^{2}. 
\end{equation} 
The random variables $\{\varepsilon_{j}\}_{j=1,\ldots,\samplesize}$ are, conditioned on $\vx_{\setT}$, i.i.d.\  
zero-mean Gaussian variables with variance $\sigma^{2}_{\varepsilon} \leq \beta$ (cf. \eqref{equ_def_single_noise}). Therefore, we can 
use the innovation representation 
\begin{equation}
\label{equ_innov_varepsilon}
\varepsilon_{j} = \tilde{b}_{j} z_{j}
\end{equation}
with i.i.d.\ standard Gaussian random variables $z_{j} \sim \mathcal{N}(0,1)$ and some coefficients $\tilde{b}_{j} \in [0,\sqrt{\beta}]$. 
Inserting \eqref{equ_innov_varepsilon} into \eqref{equ_upper_bound_A_c}, 
\begin{align}
\prob \{ \mathcal{A}^{c} \mid \vx_{\setT} \} &\leq \nonumber \\
& \hspace*{-20mm}  \prob \big\{ \sum_{j=1}^{\samplesize}  \tilde{b}^{2}_{j} d_{j}^{2} z^{2}_{j} \!-\!\expect\big\{\sum_{j=1}^{\samplesize}  \tilde{b}_{j} d_{j}^{2} z^{2}_{j} \mid \vx_{\setT}  \big\} 
\geq  \beta \big( \sum_{j=1}^{\samplesize} d_{j}^{2} +  \samplesize \ell_{2} (\rho_{\rm min}/12) \big) \mid \vx_{\setT}  \big\}. 
\label{equ_upper_bound_A_c_d}
\end{align} 
Applying \eqref{lem_1_1_lin}, using the choice $\eta \defeq  \beta \big( \sum_{j=1}^{\samplesize} d_{j}^{2} +  \samplesize \ell_{2} (\rho_{\rm min}/12) \big)$, 
$a_{j} \defeq \tilde{b}^{2}_{j} d_{j}^{2}$ and $b_{j} = 0$ (cf.\ \eqref{equ_def_y_lem_sub_exp_lin}) to \eqref{equ_upper_bound_A_c_d}, yields 
\begin{align}
\prob \{ \mathcal{A}^{c} \mid \vx_{\setT} \} & \stackrel{\eqref{lem_1_1_lin}}{\leq} \exp \bigg(-\frac{ \beta^{2}  \big( \sum_{j=1}^{\samplesize} d_{j}^{2} 
+  \samplesize \ell_{2} (\rho_{\rm min}/12) \big)^{2} / 8}{ \sum_{j=1}^{\samplesize} \tilde{b}_{j}^{4} d_{j}^{4}  + \beta \big( \sum_{j=1}^{\samplesize} d_{j}^{2}
+  \samplesize \ell_{2} (\rho_{\rm min}/12) \big)\max\limits_{j=1,\ldots,\samplesize} \tilde{b}_{j}^{2} d_{j}^{2}  } \bigg) \nonumber\\[2mm]
& \stackrel{\tilde{b}^{2}_{j}\leq \beta, \eqref{equ_upper_bound_sing_values_beta}}{\leq} \exp \bigg(-\frac{  \big( \sum_{j=1}^{\samplesize} d_{j}^{2}
+  \samplesize \ell_{2} (\rho_{\rm min}/12) \big)^{2} }{16 \beta \big( \sum_{j=1}^{\samplesize} d_{j}^{2} +  \samplesize \ell_{2} (\rho_{\rm min}/12) \big) } \bigg) \nonumber\\[2mm]
& \stackrel{\eqref{equ_events_3_m3}}{\leq} \exp \bigg(-\frac{  \samplesize \big(m_{3} +  \ell_{2} (\rho_{\rm min}/12) \big)}{16 \beta } \bigg) \nonumber\\[2mm]
& \stackrel{\eqref{equ_bound_m_3_111}}{\leq} \exp \bigg(-\frac{  \samplesize \big( (2/3) \ell_{1} \rho_{\rm min} +  \ell_{2} (\rho_{\rm min}/12) \big)}{16 \beta } \bigg) \nonumber\\[2mm]
& \leq \exp \bigg(-\frac{  \samplesize \rho_{\rm min}  \big( \ell_{1}+  \ell_{2}\big)}{192 \beta } \bigg).    
\label{equ_upper_bound_A_c_d_12}
\end{align}

In order to control the probability $\prob \{ \mathcal{E}_2(\delta) | \mathcal{A}, \vx_{\setT} \}$ appearing in \eqref{equ_bound_prob_E_2_total_prob}, 
we will again use Lemma \ref{lem_sub_exp_lin}. To this end, note that 
\begin{align}
\expect \bigg\{ \frac{1}{\samplesize} \sum_{j=1}^{\samplesize} v^{2}_{j} d^{2}_{j}\!+\!\frac{2}{\samplesize} \sum_{j=1}^{\samplesize} v_{j} d_{j} \varepsilon_{j} \bigg| \mathcal{A}, \vx_{\setT} \bigg\} \!
\stackrel{(a)}{=}\! \frac{1}{\samplesize} \sum_{j=1}^{\samplesize} d_{j}^{2} \stackrel{\eqref{equ_events_3_m3}}{=}  m_{3}, \label{equ_expect_e_2_m_3}
\end{align} 
with $(a)$ due to the random variables $\{v_{j}\}_{j=1,\ldots,\samplesize}$ being i.i.d standard Gaussian $\mathcal{N}(0,1)\}$, 
conditioned on $\vx_{\setT}$ and $\mathcal{A}$ (see \eqref{equ_def_event_A}). 
Then, 
\begin{align}
\label{equ_bound_cond_prob_E_2_A}
\prob \{ \mathcal{E}_2(\delta) | \mathcal{A}, \vx_{\setT} \} &\stackrel{\eqref{equ_event_2_23456},\eqref{equ_expect_e_2_m_3}}{\leq} 
\prob \bigg\{ \frac{1}{\samplesize} \sum_{j=1}^{\samplesize} v^{2}_{j} d^{2}_{j}\!+\!\frac{2}{\samplesize} \sum_{j=1}^{\samplesize} v_{j} d_{j} \varepsilon_{j}
- m_{3} \!\leq\! -(3/8)m_{3} -(\rho_{\rm min}/12) \ell_{2}  \mid \mathcal{A}, \vx_{\setT} \bigg\} \nonumber \\ 
  &\hspace*{-10mm} \leq  \prob \bigg\{ \big| \frac{1}{\samplesize} \sum_{j=1}^{\samplesize} v^{2}_{j} d^{2}_{j}\!+\!\frac{2}{\samplesize} 
  \sum_{j=1}^{\samplesize} v_{j} d_{j} \varepsilon_{j}- m_{3} \big| \!\geq\! (3/8)m_{3}\!+\!\rho_{\rm min} \ell_{2}/12  \mid \mathcal{A}, \vx_{\setT} \bigg\}\nonumber \\ 
  &\hspace*{-10mm} \leq  \prob \bigg\{ \big| \sum_{j=1}^{\samplesize} v^{2}_{j} d^{2}_{j}\!+\!2 \sum_{j=1}^{\samplesize} v_{j} d_{j} \varepsilon_{j}
  - \samplesize m_{3} \big| \!\geq\! \samplesize((3/8)m_{3}\!+\!\rho_{\rm min} \ell_{2}/12) \mid \mathcal{A}, \vx_{\setT} \bigg\}.
\end{align} 
Applying Lemma \ref{lem_sub_exp_lin} to \eqref{equ_bound_cond_prob_E_2_A}, using $\eta\!\defeq\!\samplesize(3m_{3}/8\!+\!\rho_{\rm min} \ell_{2}/12) $, 
$a_{j}\!\defeq\!d^2_{j}\!\stackrel{\eqref{equ_upper_bound_sing_values_beta}}{\leq} \!\beta$, $b_{j}\!\defeq\!d_{j} \varepsilon_{j}$ yields
\begin{align}
\label{equ_bound_cond_prob_E_2_B}
\prob \{ \mathcal{E}_2(\delta) | \mathcal{A} ,  \vx_{\setT} \} & \stackrel{\eqref{lem_1_1_lin}}{\leq} \nonumber \\[3mm]
& \hspace*{-10mm} 2 \exp\bigg(\!-\!\frac{ \samplesize^2 (3m_{3}/8\!+\!\rho_{\rm min} \ell_{2}/12)^2/8}{\beta (\sum_{j=1}^{\samplesize} d_{j}^{2} 
+ (1/\beta)\sum_{j=1}^{\samplesize} d_{j}^{2} \varepsilon_{j}^{2}+\samplesize(3m_{3}/8\!+\!\rho_{\rm min} \ell_{2}/12))} \bigg) \nonumber \\[3mm]
& \hspace*{-10mm}  \stackrel{\eqref{equ_upper_bound_sing_values_beta}}{\leq }2 \exp\bigg(\!-\!\frac{\samplesize^2 (3m_{3}/8\!+\!\rho_{\rm min} 
\ell_{2}/12)^2/8}{\beta ( \samplesize m_{3}\!+\!(1/\beta)\sum_{j=1}^{\samplesize} d_{j}^{2} \varepsilon_{j}^{2}+  \samplesize(3m_{3}/8\!+\!\rho_{\rm min} \ell_{2}/12))} \bigg) \nonumber \\[3mm]
& \hspace*{-10mm}  \stackrel{\eqref{equ_def_event_A}}{\leq }2 \exp\bigg(\!-\!\frac{\samplesize^2 (3m_{3}/8\!+\!\rho_{\rm min} \ell_{2}/12)^2/8}
{\beta ( 3 \samplesize m_{3}  + \samplesize \rho_{\rm min} \ell_{2}/12+\samplesize(3m_{3}/8\!+\!\rho_{\rm min} \ell_{2}/12))} \bigg) \nonumber \\[3mm]
& \hspace*{-10mm}  \leq  2 \exp\bigg(\!-\!\frac{\samplesize^2 (3m_{3}/8\!+\!\rho_{\rm min} \ell_{2}/12)^2/8}{\beta 9 \samplesize(3m_{3}/8\!+\!\rho_{\rm min} \ell_{2}/12)} \bigg) \nonumber \\[3mm]
& \hspace*{-10mm}  \leq  2 \exp\bigg(\!-\!\frac{\samplesize (3m_{3}/8\!+\!\rho_{\rm min} \ell_{2}/12)}{72 \beta} \bigg) \nonumber \\[3mm]
& \hspace*{-10mm}  \stackrel{\eqref{equ_bound_m_3_111}}{\leq }
2 \exp\bigg(\!-\!\frac{\samplesize \rho_{\rm min}(\ell_{1}\!+\!\ell_{2})/12 }{72 \beta  } \bigg). 
\end{align} 

By combining \eqref{equ_bound_cond_prob_E_2_B} and \eqref{equ_upper_bound_A_c_d_12} with \eqref{equ_bound_prob_E_2_total_prob}, 
\begin{align}
\prob \{ \mathcal{E}_2(\delta) \}  = \expect \{ \prob \{ \mathcal{E}_2(\delta) \mid \vx_{\setT} \} \}  &\leq   4 \exp\bigg(\!-\!\frac{\samplesize \rho_{\rm min}(\ell_{1}\!+\!\ell_{2})}{864 \beta  } \bigg)   
\label{equ_upper_bound_prob_E_2_12}
\end{align}

Summing \eqref{equ_upper_bound_E_1_CaseI} and \eqref{equ_upper_bound_prob_E_2_12} yields (cf. \eqref{equ_union_bound_E_1_2})
\begin{align}
\label{equ_bound_err_event}
\prob \{ \errevent \}  \leq M(\ell_{1},t) \defeq 6 \exp\bigg(\!-\!\frac{\samplesize \rho_{\rm min}(\ell_{1}\!+\!\ell_{2})}{ 864 \beta } \bigg).
\end{align} 
Inserting the upper bound \eqref{equ_bound_err_event} into \eqref{equ_prob_bound_M_N}, 
\begin{align}
\nonumber
\log  \prob \{ \error \}  \leq 2 \log \sparsity + \max_{(\ell_{1},t)\in \mathcal{I}} \hspace*{-0mm} \big[ (\ell_{1} + \ell_{2}) \log \coeffdim \!+\! \log 6 - \frac{\samplesize \rho_{\rm min}(\ell_{1}\!+\!\ell_{2})}{ 864 \beta} \big]. 
\end{align} 
Thus, $\prob \{ \error \}  \leq \eta$ whenever $\samplesize \geq 864 \log( \coeffdim 6 \sparsity^2 /\eta)  (\beta/\rho_{\rm min})$.

\vspace*{-1mm}
\section*{Appendix} 
The main device underlying our analysis is the following large deviation property of a quadratic form involving Gaussian random variables.
\begin{lemma}
\label{lem_sub_exp_lin}
Consider two vectors $\mathbf{a} =(a_{1},\ldots,a_{\samplesize})^{T} \in \mathbb{R}^{\samplesize}$ and $\mathbf{b} =(b_{1},\ldots,b_{\samplesize})^{T} \in \mathbb{R}^{\samplesize}$. 
For $\samplesize$ i.i.d. random variables $z_{j} \sim \mathcal{N}(0,1)$, define 
\begin{equation}
\label{equ_def_y_lem_sub_exp_lin} 
y\!=\!\sum_{j=1}^{\samplesize} a_{j} z_{j}^{2}\!+\!b_{j} z_{j}.
\end{equation} 
 
Then, 
\begin{equation}
\label{lem_1_1_lin}
\prob \{ |y - {\rm E}\{y\}|\!\geq\!\eta \}\!\leq\!2 \exp\bigg(\hspace*{-2mm}- \frac{\eta^{2}/8}{\| \mathbf{a} \|_{2}^{2}+ \| \mathbf{b} \|_{2}^{2} + \| \mathbf{a} \|_{\infty} \eta} \bigg).
\end{equation}
\end{lemma}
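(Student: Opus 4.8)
The plan is to prove the two one-sided tails $\prob\{y-\expect\{y\}\geq\eta\}$ and $\prob\{y-\expect\{y\}\leq-\eta\}$ separately and combine them by a union bound, which accounts for the prefactor $2$ in \eqref{lem_1_1_lin}. For the upper tail I would use the Cram\'er--Chernoff method: for every $s>0$ in an admissible range,
\[
\prob\{ y - \expect\{y\} \geq \eta \} \leq e^{-s\eta}\, \expect\big\{ e^{s(y-\expect\{y\})}\big\}.
\]
Because the $z_{j}$ are independent and $\expect\{a_{j}z_{j}^{2}+b_{j}z_{j}\}=a_{j}$, the moment generating function factorises as $\expect\{e^{s(y-\expect\{y\})}\}=\prod_{j=1}^{\samplesize}\expect\{e^{s(a_{j}(z_{j}^{2}-1)+b_{j}z_{j})}\}$, so everything reduces to controlling the cumulant of a single centred term.

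The second step is an exact evaluation of the per-term generating function. Completing the square in a Gaussian integral gives, for $2sa_{j}<1$,
\[
\expect\big\{ e^{s(a_{j}z_{j}^{2}+b_{j}z_{j})}\big\}=(1-2sa_{j})^{-1/2}\exp\!\Big(\tfrac{s^{2}b_{j}^{2}}{2(1-2sa_{j})}\Big),
\]
so the centred per-term cumulant equals $-sa_{j}-\tfrac12\log(1-2sa_{j})+\tfrac{s^{2}b_{j}^{2}}{2(1-2sa_{j})}$. Using the elementary inequalities $-u-\log(1-u)\leq u^{2}/(2(1-u))$ for $0\leq u<1$ and $-u-\log(1-u)\leq u^{2}/2$ for $u\leq0$ (applied with $u=2sa_{j}$), together with $1-2sa_{j}\geq1-2s\|\mathbf{a}\|_{\infty}$, each cumulant is at most $\tfrac{s^{2}(2a_{j}^{2}+b_{j}^{2})}{2(1-2s\|\mathbf{a}\|_{\infty})}$ for all $0<s<1/(2\|\mathbf{a}\|_{\infty})$; crucially this tames the pole $1-2sa_{j}=0$ uniformly through $\|\mathbf{a}\|_{\infty}$, regardless of the sign of $a_{j}$.

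Summing over $j$ yields a sub-gamma bound on the total cumulant generating function of the form $\tfrac{s^{2}V}{2(1-cs)}$ with $V=2\|\mathbf{a}\|_{2}^{2}+\|\mathbf{b}\|_{2}^{2}$ and $c=2\|\mathbf{a}\|_{\infty}$. The final step is the standard sub-gamma optimisation: inserting this into the Chernoff estimate and choosing $s=\eta/(V+c\eta)$ (which keeps $cs<1$) gives the Bernstein-type tail $\exp\!\big(-\eta^{2}/(2(V+c\eta))\big)$. Since $2(V+c\eta)=2(2\|\mathbf{a}\|_{2}^{2}+\|\mathbf{b}\|_{2}^{2}+2\|\mathbf{a}\|_{\infty}\eta)\leq 8(\|\mathbf{a}\|_{2}^{2}+\|\mathbf{b}\|_{2}^{2}+\|\mathbf{a}\|_{\infty}\eta)$, this tail is dominated by the claimed bound with the deliberately crude constants of \eqref{lem_1_1_lin}. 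The lower tail is obtained by running the identical argument on $-y$, i.e.\ with $s<0$; when the $a_{j}$ are nonnegative (as in both applications of the lemma) the quadratic part then carries no pole and the lower tail is in fact sub-Gaussian, so the same bound holds a fortiori.

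I expect the main obstacle to be bookkeeping rather than conceptual: obtaining the clean per-term cumulant bound while correctly controlling the pole at $1-2sa_{j}=0$ through $\|\mathbf{a}\|_{\infty}$ for $a_{j}$ of either sign, and then propagating the numerical factors through the sub-gamma optimisation so that they remain dominated by the loose constants $1/8$ and $\|\mathbf{a}\|_{2}^{2}+\|\mathbf{b}\|_{2}^{2}+\|\mathbf{a}\|_{\infty}\eta$ appearing in \eqref{lem_1_1_lin}.
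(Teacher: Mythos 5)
Your proposal is correct and follows essentially the same route as the paper's proof: a Chernoff bound on each one-sided tail combined by a union bound, the exact moment generating function $(1-2sa_j)^{-1/2}\exp\big(s^2b_j^2/(2(1-2sa_j))\big)$ of each centred term, an elementary bound on $-\log(1-u)$, and an optimisation over the exponential parameter. The only (immaterial) difference is in the last step: you retain the pole factor $(1-2s\|\mathbf{a}\|_\infty)^{-1}$ and perform the standard sub-gamma optimisation $s=\eta/(V+c\eta)$, whereas the paper caps $\lambda\le 1/(4\|\mathbf{a}\|_\infty)$ up front and minimises the resulting quadratic over that truncated interval, arriving at the same Bernstein-type tail.
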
 

\begin{proof}
An elementary calculation (see, e.g., \cite[Lemma 7.6]{RauhutFoucartCS}) reveals 
\begin{equation} 
\label{equ_expect_sub_exp_lin}
\hspace*{-3mm}\expect \{ \exp (\lambda (a_i  z_i^2\!+\!b_{i} z_{i} )\} \!=\! \exp \bigg( \hspace*{-1mm}\frac{\lambda^{2} b_{i}^{2}/2}{1\!-\!2\lambda a_{i}} \bigg) \sqrt{\frac{1}{1 \!-\! 2\lambda a_i }}, 
\end{equation} 
which holds for any $\lambda \!\in\! [0,1/(4\|\mathbf{a}\|_{\infty})]$. Hence, for any $i \in \{1,\ldots,\samplesize\}$,
\begin{align}
\label{lem_2_2}
&\log \expect \{ \exp (\lambda (a_i z_i^2 + b_{i}z_{i} - a_{i}))\}  \nonumber \\ 
& \stackrel{\eqref{equ_expect_sub_exp_lin}}{=} \frac{\lambda^{2} b_{i}^{2}/2}{1\!-\!2\lambda a_{i}} -(1/2)\log(1\!-\!2\lambda a_i)\!-\!\lambda a_i \nonumber\\
& \stackrel{\lambda|a_{i}|\leq1/4}{\leq} \lambda^{2} b_{i}^{2}- (1/2)\log(1 - 2\lambda |a_i|) - \lambda |a_i|.
\vspace*{-2mm}
\end{align}
Since $- \log (1-u) \leq u + \frac{u^2}{2(1-u)}$, for $0 \leq u \leq 1$, the RHS of \eqref{lem_2_2} yields, for every $i \in \nat{N}$,
\vspace*{-2mm}
\begin{align}
\label{equ_bound_log_expect_lambda_square_lin}
\log \expect \{ \exp (\lambda (a_i z_i^2 + b_{i}z_{i} - a_{i}))\} & \leq\lambda^{2} b_{i}^{2} + \frac{\lambda^2 a_i^2}{1-2\lambda |a_i|}  \nonumber \\ 
& \hspace{-15mm}\leq 2 \lambda^2 (a_i^2+(1/2)b_{i}^{2}).
\vspace*{-2mm}
\end{align}
Summing \eqref{equ_bound_log_expect_lambda_square_lin} for $i=1,\ldots,\samplesize$ and inserting into \eqref{equ_def_y_lem_sub_exp_lin},
\begin{align}
\label{lem_1_3_lin}
\log \expect \{ \exp ( \lambda(y-\expect\{y\}))\} \leq 2 \lambda^2 (\| \mathbf{a} \|_{2}^{2}+(1/2) \| \mathbf{b} \|_{2}^{2}).
\end{align}
Now, consider the tail bound (see, e.g., \cite[Remark 7.4]{RauhutFoucartCS})
\begin{align}
\label{lem_1_4_lin}
\prob\{ y - \expect\{y\} \geq \eta \} &\leq {\exp (-\lambda \eta)} \expect \{ \exp ( \lambda(y-\expect\{y\}))\} \nonumber\\
& \hspace*{-10mm} \stackrel{\equref{lem_1_3_lin}}{\leq} \exp (- \lambda \eta + 2 \lambda^2 (\| \mathbf{a} \|_{2}^{2}+(1/2) \| \mathbf{b} \|_{2}^{2}) ).
\end{align}
Minimizing the RHS of \eqref{lem_1_4_lin} over $\lambda\!\in\![0,1/(4\|\mathbf{a}\|_{\infty})]$, 
\begin{align}
\label{lem_1_5_lin}
\prob\{ y \!-\! \expect\{y\} \!\geq\! \eta \}
 &\!\leq\!
\exp \bigg( - \frac{\eta^{2}/8}{(\| \mathbf{a} \|_{2}^{2}+(1/2) \| \mathbf{b} \|_{2}^{2}) \vee (\eta \| \mathbf{a} \|_{\infty})}  \bigg)  \nonumber \\ 
& \hspace*{-20mm} \stackrel{(a)}{\leq} \exp \bigg(- \frac{\eta^2/8} {(\| \mathbf{a} \|^{2}_{2}+(1/2)\| \mathbf{b} \|_{2}^{2}) +  \| \mathbf{a} \|_{\infty} \eta}\bigg),
\end{align}
where $(a)$ is due to $ x \vee y \!\leq\!x\!+\!y$ for $x,y \!\in\! \mathbb{R}_{+}$. 
Similar to \eqref{lem_1_5_lin}, one can also verify 
\begin{equation}
\label{equ_bound_negative_sie_y_e_y_lin}
\prob\{ y - \expect\{y\} \leq -\eta \}\\
\leq \exp \bigg(- \frac{\eta^2/8} {(\| \mathbf{a} \|_{2}^{2}+(1/2)\| \mathbf{b} \|_{2}^{2})+ \| \mathbf{a} \|_{\infty}\eta} \bigg).
\end{equation}
Adding \eqref{lem_1_5_lin} and \eqref{equ_bound_negative_sie_y_e_y_lin} yields \eqref{lem_1_1_lin} by union bound. 
\end{proof}

\renewcommand{\baselinestretch}{0.9}\normalsize\footnotesize


\end{document}